\documentclass[a4paper,12pt,reqno]{amsproc}
\usepackage[a4paper, margin=.9in]{geometry}
\usepackage[utf8]{inputenc}
\usepackage{hyperref}
\usepackage{url}
\usepackage{mathtools}
\usepackage{amsmath,amssymb,amsthm}
\usepackage{graphicx}
\usepackage{mathptmx}
\usepackage{todonotes}
\usepackage{bm}
\usepackage{mathpazo}
\usepackage[mathcal]{eucal}
\usepackage{thmtools}

\hypersetup{
colorlinks=true,
linkcolor = red,
citecolor = blue,
filecolor = blue,
urlcolor = blue
}

\declaretheorem[name=Theorem]{Thm}
\declaretheorem[within=section,name=Lemma]{Lem}
\declaretheorem[sibling=Lem,name=Assumption]{Ass}

\theoremstyle{remark}
\declaretheorem[sibling=Lem,name=Remark]{Rem}
\declaretheorem[sibling=Lem,name=Example]{Ex}

\theoremstyle{definition}
\declaretheorem[sibling=Lem,name=Definition]{Def}

\DeclarePairedDelimiterX{\KLx}[2]{(}{)}{%
  #1\;\delimsize\|\;#2%
}

\DeclarePairedDelimiter{\norm}{\lVert}{\rVert}


\newcommand{\FSeries}[1]{\mathbb{T}}

\newcommand{\Hol}[1]{#1\operatorname{-\textnormal{H{\"o}l}}}
\newcommand{\var}[1]{#1\operatorname{-var}}

\newcommand{\dataspace}{\mathbf{Met}} 
\newcommand{\barspace}{\mathbf{Bar}}

\newcommand{\PH}{\textnormal{PH}}
\newcommand{\tensorspace}{\mathbf{T}}
\newcommand{\Hom}{H}
\newcommand{\field}{\mathbb{F}}



\newcommand{\SC}{\mathrm{K}}
\newcommand{\V}{\mathbf{U}} 

\newcommand{\Barc}{\operatorname{Bar}}
\newcommand{\rone}{t}
\newcommand{\rtwo}{s}

\newcommand{\mrd}{\mathrm{d}}

\newcommand{\quot}{\hspace{-.1em}/\hspace{-.1em}}

\newcommand{\calD}{\mathcal{D}}

\newcommand{\calF}{\mathcal{F}}

\newcommand{\calH}{\mathcal{H}}
\newcommand{\calI}{\mathcal{I}}

\newcommand{\calX}{\mathcal{X}}

\newcommand{\frakT}{\mathfrak{T}}

\newcommand{\RR}{\ensuremath{\mathbb{R}}}

\newcommand{\NN}{\ensuremath{\mathbb{N}}}

\newcommand{\EE}{\ensuremath{\mathbb{E}}}

\makeatletter
\providecommand*{\diff}%
        {\@ifnextchar^{\DIfF}{\DIfF^{}}}
\def\DIfF^#1{%
        \mathop{\mathrm{\mathstrut d}}%
                \nolimits^{#1}\gobblespace
}
\def\gobblespace{%
        \futurelet\diffarg\opspace}
\def\opspace{%
        \let\DiffSpace\!%
        \ifx\diffarg(%
                \let\DiffSpace\relax
        \else
                \ifx\diffarg\[%
                        \let\DiffSpace\relax
                \else
                        \ifx\diffarg\{%
                                \let\DiffSpace\relax
                        \fi\fi\fi\DiffSpace}
\makeatother


\newcommand{\paths}{\mathrm{\mbox{\bf BV}}}






\newcommand{\Sig}{\operatorname{S}}





\newcommand{\meas}{\mathcal{M}}



\newcommand{\lsc}{\textnormal{L}}
\newcommand{\lscQ}{{\textnormal{iL}}}
\newcommand{\eul}{\chi}
\newcommand{\env}{\textnormal{E}}
\newcommand{\betti}{\beta}
\newcommand{\naive}{\textnormal{N}}

\newcommand{\SW}{\textnormal{SW}}

\newcommand{\PI}{\textnormal{PI}}


\usepackage{todonotes}
\usepackage{float}
\usepackage{graphicx}
\usepackage{subfig}
\usepackage{mathtools}
\usepackage{enumerate}
\usepackage{enumitem}

\title[Persistence paths and signature features in topological data analysis]{Persistence paths and signature features\\ in topological data analysis}

\author{Ilya Chevyrev}
\address{Mathematical Institute, University of Oxford}
\email{chevyrev@maths.ox.ac.uk}

\author{Vidit Nanda}
\address{Mathematical Institute, University of Oxford, and School of Mathematics, Institute for Advanced Study}
\email{nanda@maths.ox.ac.uk}

\author{Harald Oberhauser}
\address{Mathematical Institute, University of Oxford}
\email{oberhauser@maths.ox.ac.uk}

\begin{document}
\maketitle
\begin{abstract}
We introduce a new feature map for barcodes that arise in persistent homology computation.
The main idea is to first realize each barcode as a path in a convenient vector space, and to then compute its path signature which takes values in the tensor algebra of that vector space.
The composition of these two operations --- barcode to path, path to tensor series --- results in a feature map that has several desirable properties for statistical learning, such as universality and characteristicness, and achieves state-of-the-art results on common classification benchmarks.
\end{abstract}

\section{Introduction}

Algebraic topology provides a promising framework for extracting nonlinear features from finite metric spaces via the theory of {\em persistent homology}~\cite{ghrist:08, ns, oudot}. Persistent homology has solved a host of data-driven problems in disparate fields of science and engineering --- examples include signal processing~\cite{Perea2015}, proteomics~\cite{Gameiro15}, cosmology~\cite{CosmicWeb}, sensor networks~\cite{dsg}, molecular chemistry~\cite{fullerene} and computer vision~\cite{gait}. The typical output of persistent homology computation is called a {\em barcode}, and it constitutes a finite topological invariant of the coarse geometry which governs the shape of a given point cloud. 

\begin{figure}[h!]
\includegraphics[scale = 0.3]{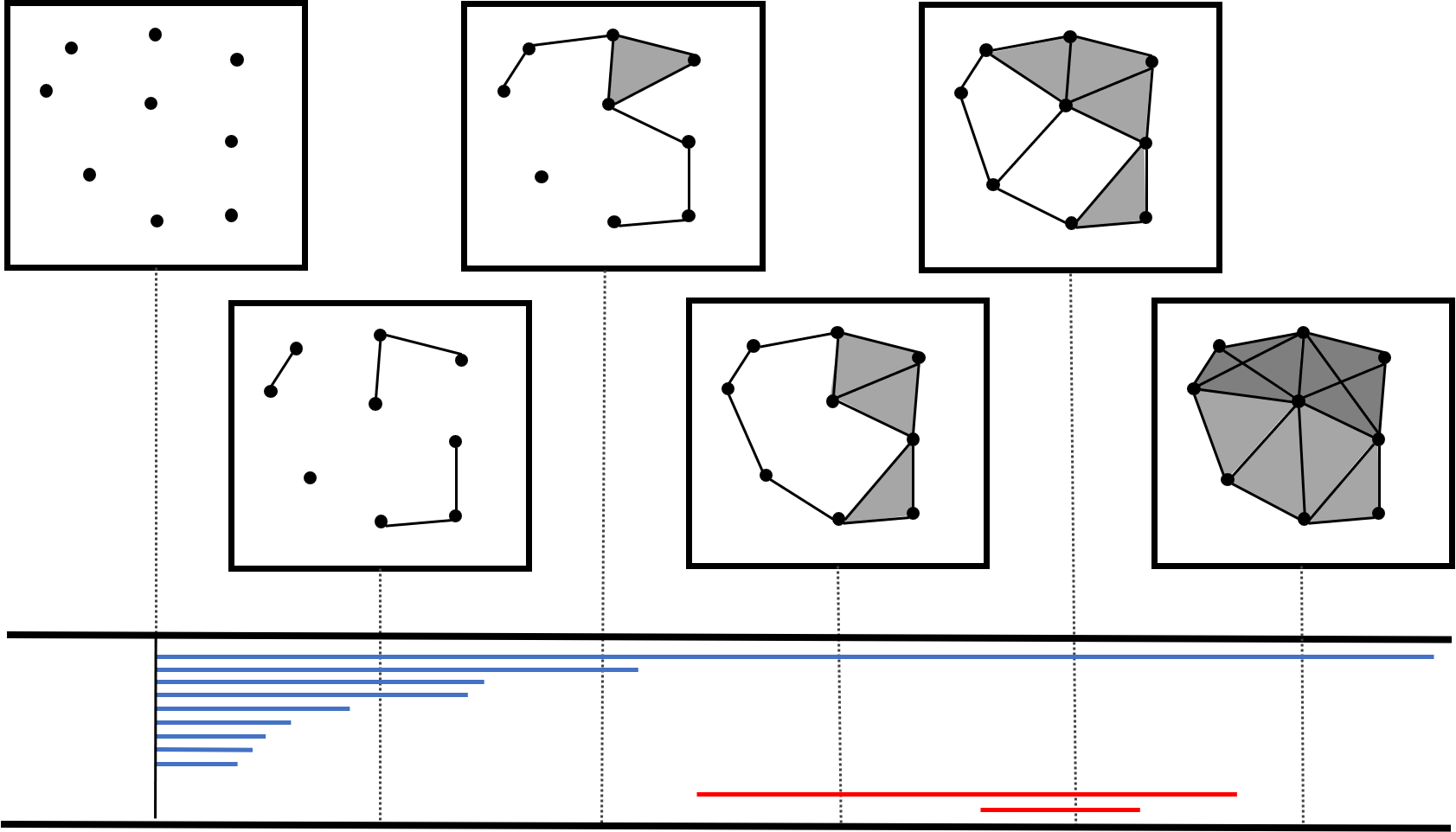}
\end{figure}

For the purposes of this introduction, it suffices to think of a barcode as a (multi)set of intervals $[b_\bullet,d_\bullet)$, each identifying those values of a scale parameter $\epsilon \geq 0$ at which some topological feature --- such as a connected component, a tunnel, or a cavity ---  is present when the input metric space is thickened by $\epsilon$.
A central advantage of persistent homology is its remarkable {\em stability} theorem~\cite[Ch.~5.6]{cdsgo}.
This result asserts that the map 
$\dataspace \to \barspace$ which assigns barcodes to finite metric spaces is 1-Lipschitz when its source and target are equipped with certain natural metrics.

\subsection*{Persistence paths and signature features}
Notwithstanding their usefulness for certain tasks, barcodes are notoriously unsuitable for standard statistical inference because $\barspace$ itself is a nonlinear metric space, and most scalable learning algorithms rely on linear methods.
In this work, we construct a feature map of the form
\[
  \Phi_\bullet:\barspace\rightarrow \tensorspace,
\]
where $\tensorspace = \tensorspace(V)$ denotes the {\em tensor algebra} of a linear space $V$.
The feature map $\Phi_\bullet$ is defined as composite, $\Phi_\bullet = \Sig \circ \iota_\bullet$, of a \emph{persistence path embedding} $\iota_\bullet$ and the \emph{path signature} $\Sig$,
\[
  \barspace \stackrel{\iota_\bullet}{\longrightarrow} \paths(V)
  \stackrel{\Sig}{\longrightarrow} \tensorspace(V),
\]
where the intermediate space $\paths(V)$ contains all continuous maps $[0,1] \to V$ of bounded variation.

\begin{description}
  \item[Persistence path embedding $\iota_\bullet$]   The maps $\iota_\bullet:\barspace \to \paths(V)$ disambiguate our $\Phi_\bullet$'s.
  There are many such embeddings, and they differ significantly in terms of stability, computability, and discriminative power.
  \item[Signature features $\Sig$]
  The map $\Sig$ represents a path as its $\tensorspace$-valued {\em signature}. 
  This map is injective (modulo natural equivalence classes of paths), provides a hierarchical description of a path, and has a rich algebraic structure that captures natural operations on paths, such as concatenation and time reversal.
\end{description}

The concept of a persistence path embedding $\iota_\bullet$ reflects the interpretation of persistent homology as a dynamic description of the topological features which appear and disappear as a metric space is thickened across various scales.
There is precedent for such constructions; e.g.~Bubenik's landscapes~\cite{Bubenik15} can be reformulated to give an important example of such an $\iota_\bullet$, which we denote with $\iota_\lscQ$.
Despite their intuitive appeal, these approaches rely ultimately on a choice of feature map for paths, on which the resulting statistical learning guarantees depend.\footnote{For example,~\cite{Bubenik15} chooses a functional on the Banach space of paths, but how to choose such functionals in a non-parametric fashion and evaluate them efficiently remains unclear (unless something special is known a priori about the probability distribution of the observed barcodes).}
We show here that the composition with the signature map resolves such issues.
For example, one of our results is that the feature map $\Phi_\lscQ=\Sig \circ
\iota_\lscQ$ is
\begin{description}
\item[universal:] non-linear functions of the data are approximated by linear functionals in feature space: for every (sufficiently regular) function $f:\barspace\to \RR$ there exists $\ell$ in the dual of $\tensorspace$, such that $f(B)\approx\langle \ell, \Phi_\lscQ(B)\rangle$ uniformly over barcodes $B$.

\item[characteristic:] the expected value of the feature map characterizes the law of the random variable:
  the map which sends a probability measure $\mu$ on $\barspace$ to its expectation $\mu \mapsto \EE_{B \sim
    \mu}[\Phi_\lscQ(B)]$ in $\tensorspace$ is injective.

\item[stable:] the map $\Phi_\lscQ: \barspace \rightarrow \tensorspace $ has explicit continuity properties, as recorded in Theorems~\ref{thm:stability} and~\ref{thm:feature} below.
\end{description}
Perhaps the biggest advantage of our approach is that it is not limited to $\iota_\lscQ$.
Besides $\iota_\lscQ$, we will also discuss the following unstable path embeddings:
\begin{description}
  
\item[the naive embedding $\iota_\naive$] sorts all intervals decreasing in length (with intervals of equal length ordered by increasing birth times), enumerates them $\{1,\ldots,n\}$, and forms an $n$-dimensional path by running in the $i$-th coordinate with unit speed if the $i$-th bar is active (otherwise remaining constant).

\item[Euler embedding $\iota_\eul$] reduces a barcode to a single Euler characteristic curve (see~\cite[Sec.~3.2]{pht}). The resulting feature map $\Phi_\eul$ is not stable, but is extremely fast to compute.

\item[Betti embedding $\iota_\betti$] records only the Betti numbers as a function of the scale, and ignores information (contained in the barcode) which connects homology across different scale values.

\item[envelope embedding $\iota_\env$] constructed by sorting the intervals $[b_\bullet,d_\bullet)$ of a barcode in descending order by length, and then assembling the ordered sequence of $b_\bullet$'s and $d_\bullet$'s into two separate paths. This appears to be a completely new embedding.

\end{description}
Analogous statements for universality and characteristicness hold for the other $\Phi_\bullet$'s.
Each of these embeddings $\iota_\bullet$ leads to different properties in terms of stability, computability, and discriminative power, for the associated feature maps $\Phi_\bullet$. For example, $\Phi_\env$ has neither the stability of $\Phi_\lscQ$ nor the computability of $\Phi_\eul$, but it gives state-of-the-art performance on supervised classification tasks.
The emergence of a single feature map which is optimal along all three axes (stability, computability, discriminative power) appears unlikely, since these requirements tend to contravene each other.
For example, stability requires the feature map to depend mostly on the longer intervals in a barcode, while in various problems (such as~\cite{Gameiro15}), the signal of interest also resides in intervals of intermediate and short length.

\subsection*{Complexity} 
The dimension of $V$ varies significantly between the different persistence path embeddings.
If a barcode contains $\approx 10^3$ intervals, $\iota_\naive$ would map it to a path that evolves in a $\approx 10^3$-dimensional space, whereas $\iota_\env$ always yields a path in $2$ dimensions.  
Each of the above feature maps $\Phi_\bullet$ gives a {\em kernel} for barcodes $k_\bullet(B,B') :=\langle \Phi_\bullet(B), \Phi_\bullet(B') \rangle$, and following~\cite{2016arXiv160108169K}, this kernel can be very efficiently computed regardless of $\dim(V)$ as long as $V$ carries an inner product.
However, for low-dimensional embeddings, $\Phi_\bullet$ can be computed directly and performs very well (e.g.~$\Phi_\env$ for the envelope embedding $\iota_\env$).
\subsection*{Benchmarks and related work}
Statistical learning from barcodes has received a lot of attention, see the background section in~\cite{2015arXiv150706217A} for a recent survey. 
The most common theme is to construct a kernel~\cite{2017arXiv170603358C} or polynomial coordinates~\cite{adcock2016ring, di2015comparing} that serve as features for barcodes.
We believe one strength of our approach is the access to both, the kernel and its feature map (at least for the Betti, Euler, and envelope embeddings; in practice the naive embedding is only accessible via kernelization due the high-dimensionality of the persistence paths); the former gives access to well-developed tools from the kernel and Gaussian processes learning literature, while the latter allows us to use {\em any} learning method such as random forests or neural networks.
A second advantage is that different choices of persistence path embeddings facilitates emphasis on different topological properties (so in a supervised learning task, the optimal $\iota_\bullet$ can be determined by cross-validation).

\subsection*{Acknowledgments}
IC is funded by a Junior Research Fellowship of St John's College, Oxford.
VN’s work is supported by The Alan Turing Institute under the EPSRC grant number EP/N510129/1, and by the Friends of the Institute for Advanced Study.
HO is supported by the Oxford-Man Institute of Quantitative Finance.
We are grateful to Steve Oudot and Mathieu Carri{\`e}re for generously sharing their data~\cite{2017arXiv170603358C} with us.

\section{Background} \label{sec:background}

As mentioned in the introduction, we construct maps of the form
\[
\barspace \to \paths(V) \to \tensorspace(V),
\]
from the space of persistence barcodes $\barspace$ to the tensor algebra $\tensorspace(V)$ of $V$ via the space of bounded variation paths $\paths$.
In this section we define these three spaces and recall important properties; see~\cite{oudot,cdsgo} resp.~\cite{lyons-04} for more details about $\barspace$ resp.~$\paths$. 
\subsection{Persistence, barcodes and stability}
\label{subsec:persistence}
The {\em Vietoris--Rips} filtration~\cite[Sec.~3.1]{ns} associates a one-parameter nested family of finite simplicial complexes $\{\SC(\rone)\}_{\rone \geq 0}$ to each finite metric space $(X,\Delta)$ via the following rule. A subset $\{x_0,\ldots,x_k\}$ of $X$ spans a $k$-dimensional simplex in $\SC(\rone)$ if and only if all the pairwise distances satisfy $\Delta(x_i,x_j) \leq \rone$. Thus, one has the inclusion $\SC(\rone) \subseteq \SC(\rtwo)$ whenever $\rone \leq \rtwo$. Computing the homology of this family~\cite[Ch.~2]{hatcher} with coefficients in a field $\field$ produces, in each dimension, a corresponding family of $\field$-vector spaces $\V_\bullet$ as follows:
\[
\V_\bullet(\rone) = \Hom_\bullet (\SC(\rone);\field),
\]  
and inclusions of simplicial complexes induce linear maps $\V_\bullet(\rone) \to \V_\bullet(\rtwo)$ for $\rone \leq \rtwo$.
This data consisting of vector spaces and linear maps indexed by real numbers is called a {\em persistence module}. 

The following result from~\cite{zc} uses the fact that the polynomial ring $\field[x]$ in one variable $x$ acts on sufficiently tame persistence modules. Since this ring is a principal ideal domain, $\field[x]$-modules have a particularly simple representation theory.
\begin{Thm}[Structure]
\label{thm:structure}
Under mild assumptions (always satisfied by Vietoris-Rips homology of finite metric spaces), each persistence module $\V$ is completely characterized up to isomorphism by a finite collection of (not necessarily distinct) intervals $B = B(\V) = \{[b_\bullet,d_\bullet)\}$, called its {\em barcode}. 
\end{Thm}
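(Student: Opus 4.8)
The plan is to reduce the statement to the classical structure theorem for finitely generated modules over a principal ideal domain (PID). The first step is to convert the persistence module $\V$, which is a priori indexed by the continuous parameter $\rone \geq 0$, into a purely algebraic object over $\field[x]$. The stated mild assumptions (tameness) guarantee that the structure maps $\V_\bullet(\rone) \to \V_\bullet(\rtwo)$ change only at finitely many critical values of the parameter and that each $\V_\bullet(\rone)$ is finite-dimensional. This is automatic for Vietoris--Rips homology of a finite metric space $(X,\Delta)$: the complex $\SC(\rone)$ takes only finitely many distinct isomorphism types as $\rone$ ranges over $[0,\infty)$, since it can only change at the finitely many pairwise distances $\Delta(x_i,x_j)$. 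Consequently one may discretize: choosing a finite set of critical values $0 = a_0 < a_1 < \cdots < a_N$ and relabelling, $\V$ becomes a persistence module indexed by $\NN$ with no loss of information.

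The second step is the correspondence, following~\cite{zc}, between $\NN$-indexed persistence modules and graded modules over the graded polynomial ring $\field[x]$ with $\deg x = 1$. Concretely, one sets $M = \bigoplus_{n \geq 0} \V_\bullet(a_n)$ and lets $x$ act in each degree as the structure map $\V_\bullet(a_n) \to \V_\bullet(a_{n+1})$; this makes $M$ into a graded $\field[x]$-module, and the assignment $\V \mapsto M$ is functorial and in fact an equivalence of categories. Finiteness of the filtration together with finite-dimensionality of each graded piece ensures that $M$ is a \emph{finitely generated} $\field[x]$-module.

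The third step invokes that $\field[x]$ is a PID and applies the graded structure theorem: every finitely generated graded $\field[x]$-module decomposes, uniquely up to reordering of summands, as a finite direct sum of graded cyclic modules, namely shifted copies of the free module $\Sigma^i\field[x]$ and of the torsion modules $\Sigma^i\field[x]/(x^\ell)$, where $\Sigma^i$ denotes the degree shift by $i$. Reading this decomposition back through the correspondence, a free summand $\Sigma^i\field[x]$ is precisely the interval module supported on $[a_i,\infty)$, while a torsion summand $\Sigma^i\field[x]/(x^\ell)$ is the interval module supported on $[a_i,a_{i+\ell})$; undoing the relabelling restores the original real parameter values. The resulting finite multiset of intervals is the barcode $B = B(\V)$, and uniqueness of the graded decomposition yields that $\V$ is characterized up to isomorphism by $B(\V)$.

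The main obstacle is the bookkeeping in the first step --- making the tameness hypotheses precise and verifying that discretization genuinely loses no information --- together with checking that the assignment in the second step is an equivalence of categories, so that indecomposable summands correspond to indecomposables and the uniqueness statement transfers faithfully to the barcode. Once these are in place, the third step is a direct appeal to commutative algebra. An alternative route, which avoids finite generation and works directly over the continuous parameter, is Crawley-Boevey's theorem on pointwise finite-dimensional persistence modules; but since finiteness is automatic in the present geometric setting, the PID argument is the most economical.
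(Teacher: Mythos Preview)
Your proposal is correct and follows exactly the argument the paper has in mind: the paper does not give its own proof of this theorem but simply cites~\cite{zc}, noting that ``the polynomial ring $\field[x]$ in one variable $x$ acts on sufficiently tame persistence modules'' and that ``since this ring is a principal ideal domain, $\field[x]$-modules have a particularly simple representation theory.'' Your discretize--then--apply--the--graded--PID--structure--theorem outline is precisely the Zomorodian--Carlsson argument being referenced, so there is nothing to add.
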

Thus, a barcode is simply a multi-set containing some subintervals of $[0,\infty]$.
When the persistence module in question comes from the Vietoris-Rips construction as described above, its barcode provides a complete summary of all the intermediate homology of $\SC(\rone)$ across $t \in [0,\infty)$.
In particular, the $i$-th {\em Betti number} of $\SC(\rone)$, written 
\begin{align}
\label{eq:betti}
\beta_i^\rone = \dim \Hom_i (\SC(\rone);\field),
\end{align} equals the number of intervals in the $i$-dimensional barcode $B_i = \Barc(\V_i)$ that contain $\rone$.
Similarly, for $t \leq s$, the rank of the induced map on homology
\begin{align}
\label{eq:rankhom}
\beta_i^{\rone,\rtwo} = \text{rank}\big(\Hom_i (\SC(\rone);\field) \to \Hom_i (\SC(\rtwo);\field)\big)
\end{align}
equals the number of intervals in $B_i$ that contain $[\rone,\rtwo]$. 

There are several efficient algorithms which take as input finite metric spaces and produce as outputs the barcodes of their Vietoris-Rips filtrations~\cite{mors, hg}. Concurrently, the theory has also developed at a rapid pace, and the following result~\cite{cseh, cdsgo} is an exemplar of its progress. (Note that $\dataspace$ is the collection of all finite metric spaces while $\barspace$ is the collection of all barcodes containing finitely many intervals.)

\begin{Thm}[Stability]
\label{thm:stability}
The map $\PH_i:\dataspace \to \barspace$ which assigns to each finite metric space its $i$-dimensional Vietoris-Rips persistent homology barcode is $1$-Lipschitz for every $i \geq 0$. Here $\dataspace$ is endowed with the Gromov-Hausdorff distance~\cite{gromov} while $\barspace$ is endowed with the bottleneck distance (as defined in~\cite[Sec.~3.1]{cseh}).
\end{Thm}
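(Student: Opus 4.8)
The plan is to route the comparison of barcodes through the \emph{interleaving distance} on persistence modules, which is the natural bridge between the metric geometry of the inputs and the combinatorics of the outputs. First I would recall the two auxiliary notions. The interleaving distance $d_I$ between persistence modules $\V,\mathbf{W}$ is the infimum of those $\epsilon\ge 0$ for which there exist morphisms $\V(\rone)\to\mathbf{W}(\rone+\epsilon)$ and $\mathbf{W}(\rone)\to\V(\rone+\epsilon)$ whose two composites agree with the internal maps $\V(\rone)\to\V(\rone+2\epsilon)$ and $\mathbf{W}(\rone)\to\mathbf{W}(\rone+2\epsilon)$. The Gromov--Hausdorff distance I would present through correspondences $R\subseteq X\times Y$ and their distortion $\mathrm{dis}(R)=\sup\{\,|\Delta_X(x,x')-\Delta_Y(y,y')|:(x,y),(x',y')\in R\,\}$, so that $d_{\mathrm{GH}}(X,Y)=\tfrac12\inf_R\mathrm{dis}(R)$. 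The proof then factors as two inequalities: a \emph{geometric} step bounding $d_I$ of the Vietoris--Rips modules by the distortion of a correspondence, and an \emph{algebraic} step (the Algebraic Stability / Isometry Theorem of~\cite{cseh,cdsgo}) bounding $d_B\le d_I$. I would cite the latter, since it is the deep structural input and is precisely the content attributed to those references; the work I would actually carry out is the geometric step.

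The geometric heart is the construction of an explicit interleaving from a correspondence. Fix $R$ with $\mathrm{dis}(R)=\rho$ and choose vertex maps $\phi\colon X\to Y$ and $\psi\colon Y\to X$ subordinate to $R$, meaning $(x,\phi(x))\in R$ and $(\psi(y),y)\in R$. If $\{x_0,\dots,x_k\}$ spans a simplex of $\SC^X(\rone)$, then $\Delta_Y(\phi(x_i),\phi(x_j))\le\Delta_X(x_i,x_j)+\rho\le\rone+\rho$, so $\phi$ carries it to a simplex of $\SC^Y(\rone+\rho)$; hence $\phi$ defines a simplicial map $\SC^X(\rone)\to\SC^Y(\rone+\rho)$ compatible with the inclusions, and symmetrically for $\psi$. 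Applying the functor $\Hom_i(-;\field)$ turns these into morphisms of persistence modules shifted by $\rho$.

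The main technical obstacle is to verify that the composites reduce to the correct internal maps \emph{on homology}, not on the nose: I would show that $\psi\circ\phi$ is \emph{contiguous} to the inclusion $\SC^X(\rone)\hookrightarrow\SC^X(\rone+2\rho)$, so that the two induce the same linear map. Concretely, for a simplex $\{x_0,\dots,x_k\}$ of $\SC^X(\rone)$ one checks that the combined vertex set $\{x_i\}\cup\{\psi\phi(x_i)\}$ has all pairwise distances $\le\rone+2\rho$, via three cases: $\Delta_X(x_i,x_j)\le\rone$; the bound $\Delta_X(\psi\phi(x_i),\psi\phi(x_j))\le\Delta_Y(\phi(x_i),\phi(x_j))+\rho\le\rone+2\rho$; and the mixed estimate $\Delta_X(x_i,\psi\phi(x_j))\le\Delta_Y(\phi(x_i),\phi(x_j))+\rho\le\rone+2\rho$, where the last step uses that $x_i$ and $\psi\phi(x_j)$ are $R$-related to $\phi(x_i)$ and $\phi(x_j)$ respectively. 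This exhibits a $\rho$-interleaving, so $d_I(\V_i^X,\V_i^Y)\le\rho$; taking the infimum over $R$ gives $d_I\le 2\,d_{\mathrm{GH}}$, and combining with the algebraic step yields $d_B(\PH_i(X),\PH_i(Y))\le 2\,d_{\mathrm{GH}}(X,Y)$.

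What remains is pure bookkeeping on normalization: the factor $2$ is an artifact of indexing the Vietoris--Rips filtration by diameter (the convention $\Delta(x_i,x_j)\le\rone$) rather than by radius. I would absorb it either by reindexing the filtration by $\rone/2$ or by adopting the correspondence-distortion normalization of~\cite{cseh}, under which the chain of inequalities closes up to the sharp $1$-Lipschitz bound $d_B(\PH_i(X),\PH_i(Y))\le d_{\mathrm{GH}}(X,Y)$. Since every estimate above is functorial in the homological degree and uniform in $i$, this simultaneously settles the claim for all $i\ge 0$.
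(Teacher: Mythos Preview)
The paper does not supply its own proof of Theorem~\ref{thm:stability}: it is stated purely as a background result imported from the literature, with the line ``the following result~\cite{cseh, cdsgo} is an exemplar of its progress'' serving as the entire justification. There is therefore nothing in the paper to compare your argument against beyond the references themselves.

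That said, your proposal is correct and is exactly the standard argument one finds in those references: build simplicial maps $\SC^X(\rone)\to\SC^Y(\rone+\rho)$ from a correspondence of distortion $\rho$, upgrade these to an interleaving of persistence modules via contiguity, and then invoke the algebraic stability theorem $d_B\le d_I$. Your handling of the contiguity verification is clean and the three pairwise-distance cases are the right ones.

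One point worth flagging is the constant. With the Vietoris--Rips convention the paper adopts (simplices defined by $\Delta(x_i,x_j)\le\rone$, i.e.\ indexing by diameter) and the standard Gromov--Hausdorff distance $d_{\mathrm{GH}}=\tfrac12\inf_R\mathrm{dis}(R)$, the argument you wrote genuinely produces $d_B\le 2\,d_{\mathrm{GH}}$, and this factor of $2$ is \emph{not} removable by a change of normalization without simultaneously changing one of the two metrics named in the theorem statement. The cited source~\cite{cseh} obtains a Lipschitz constant of $1$ by working with a rescaled filtration parameter (or equivalently a nonstandard normalization of $d_{\mathrm{GH}}$); the paper is implicitly adopting that convention when it asserts ``$1$-Lipschitz''. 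Your final paragraph acknowledges this, but you should be explicit that reindexing the filtration alters the barcode itself (scaling all endpoints by $\tfrac12$), so the ``$1$-Lipschitz'' claim is only literally true under that rescaled convention, not under the raw diameter convention written out in Section~\ref{subsec:persistence}.
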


Roughly, two barcodes lie within bottleneck distance $\epsilon$ of each other if it is possible to deform one to the other by moving the endpoints of all its intervals by at most $\epsilon$ (and vice-versa). Thus, the longer intervals are more stable to perturbation of the originating metric space (in particular, intervals of length smaller than $2\epsilon$ might be created or destroyed during such a deformation).

\subsection{Paths of bounded variation}
\label{subsec:pathspace}
Let $V$ be a normed real vector space. Given a continuous path $x:[0,T] \to V$ (for some $T \geq 0$) and a finite partition of $[0,T]$
\[
  \overline{p} = (0 = p_0 \leq p_1 \leq \cdots \leq p_{\ell-1} \leq p_\ell = T),
\]
the {\em 1-variation of $x$ along $\overline{p}$} is given by
\[
\text{var}_1(x;\overline{p}) = \sum_{i=0}^{\ell-1} \norm{x(p_{i+1})-x(p_i)}_V.
\]

\begin{Def}
 The {\em total $1$-variation} of a continuous path $x:[0,T] \to V$ is defined as
  \[
  \norm{x}_{\var{1}} = \norm{x(0)}+\sup_{\overline{p}} \{\text{var}_1(x;\overline{p})\},
  \]
  where the supremum is taken over all finite partitions of $[0,T]$.
  The normed real vector space $\paths(V)$ consists of all continuous paths $x$ that satisfy $\norm{x}_{\var{1}} < \infty$, with addition and scalar multiplication being defined pointwise. The induced metric on $\paths(V)$ is given as usual by $\norm{x-y}_{\var{1}}$.
  \end{Def}
Functions of bounded variation lie strictly between Lipschitz-continuous functions
and almost everywhere differentiable functions, and in particular every
Lipschitz-continuous function $[0,T] \to V$ lies within $\paths(V)$.
\subsection{The tensor algebra}
\label{subsec:tensoralgebra}
Given a real vector space $V$ and an integer $m \geq 0$, let $V^{\otimes m} = V \otimes V \otimes \cdots \otimes V$ denote the $m$-fold tensor product of $V$ with itself.
By convention, $V^{\otimes 0} = \RR$.
The tensor algebra $\tensorspace(V)$ of $V$ is the direct product
\[
\tensorspace(V) = \prod_{m \geq 0} V^{\otimes m}.
\]
Thus, each element of $\tensorspace(V)$ is a sequence $(v_0,v_1,\ldots)$ where $v_m \in V^{\otimes m}$.
We equip $\tensorspace(V)$ with the structure of a (graded) algebra under the tensor product operation, for which $V^{\otimes k} \otimes V^{\otimes \ell}$ takes values in $V^{\otimes (k+\ell)}$.
Finally, let us emphasize that $\tensorspace(V)$ is a linear space which makes it a suitable feature space.
\begin{Ex}
  For $V=\RR^d$, $\tensorspace(V)$ is the feature space for one of the arguably most important feature maps for data in $\RR^d$: the polynomial feature map
  \begin{align}
    \label{eq:polynom}
x \mapsto (1,x,x^{\otimes 2},\ldots).
\end{align}
As we show in Section \ref{sec:persistence}, the signature map for paths in $V$ can be seen as a generalisation of the polynomial feature map \eqref{eq:polynom}.   
\end{Ex}
\section{From barcodes to paths}\label{sec:persistence}

In this section we introduce several persistence path embeddings
\[
  \iota_\bullet:\barspace \to \paths(V).
  \]
To avoid technicalities, we make the following assumption (which is always met if $B$ arises from the persistent homology of a finite metric space):\footnote{The assumption can be sufficiently mollified, but since the underlying motivation for this work is computational and finitary, we do not lose any structure of interest by restricting to tame barcodes.}
\begin{Ass}
 Assume that every barcode $B$ encountered in this section is {\em tame} in two senses: first, it has only finitely many intervals, and second, each interval is contained within $[0,T_B]$ for some sufficiently large $T_B$.
\end{Ass}

\subsection{The (integrated) landscape embedding}\label{subsec:persistence landscapes}

We first present an embedding with desirable stability properties.
The persistence landscape of a barcode $B$ is a single function $\Lambda:\NN \times \RR \to \RR$, but it is often convenient to denote each $\Lambda(k,-):\RR \to \RR$ by $\lambda_k$.
The best introduction to landscapes is visual:
\begin{figure}[h!]
\centering
\includegraphics[scale = 0.25]{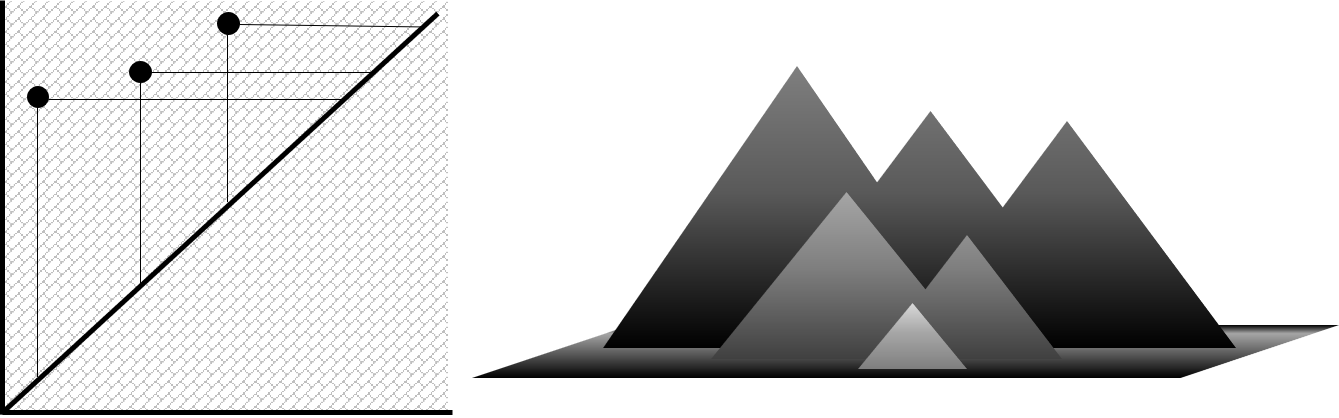}
\end{figure}

To the left is a barcode containing only three intervals, where each $[b,d)$ is shown as a point in the plane with coordinates $(b,d)$. The construction of three associated landscape functions $\lambda_1 \geq \lambda_2 \geq \lambda_3 \geq 0$, which are shown to the right, proceeds by first projecting these points onto the diagonal, and then extracting successive maximal envelopes of the resulting arrangement of line segments. The higher $\lambda_k$ for this illustrated barcode are all identically zero. 

\begin{Def}\cite[Def.~3]{Bubenik15}
\label{def:landscape}
The {\bf landscape} $\Lambda = \Lambda^B$ of the barcode $B$ is the (continuous) function $\NN \times \RR \to \RR \cup \{\infty\}$ given by
\[
\Lambda^B(k,\rone) = \lambda^B_k(\rone) = \sup\{\rtwo \geq 0 \mid \beta^{\rone-\rtwo,\rone+\rtwo} \geq k\}.
\]
Here, $\beta^{\rone-\rtwo,\rone+\rtwo}$ equals the number of intervals in $B$ which contain $[\rone-\rtwo,\rone+\rtwo]$ (see~\eqref{eq:rankhom} for the case of barcodes arising from persistent homology). Moreover, we adopt the usual convention that $\Lambda^B(k,\rone) = 0$ whenever the supremum is being taken over the empty set. 
\end{Def}

For tame barcodes $B$, one can safely exclude $\infty$ from the codomain of $\Lambda^B$. Moreover, each $\lambda^B_k$ becomes bounded and compactly supported (in addition to continuous), so we may view the assignment of landscapes to barcodes as a function 
\[
\Lambda^\bullet: \barspace \to L^p(\NN \times \RR)
\] for {\em every} $p \in [1,\infty]$.

\begin{Def}
\label{def:lscemb}
The {\bf landscape embedding} $\iota_\lsc: \barspace \to
\paths(\ell^\infty)$ assigns to each landscape $B$ a path $\iota_\lsc \left(  B \right)$ in $\paths(\ell^\infty)$ whose $k$-th component is the $k$-th landscape function for each $k \in \NN$: 
\[
  [\iota_\lsc (B)]_k(t) =  \lambda_k(t).
\] 
Similarly, the {\bf integrated landscape embedding} $\iota_\lscQ: \barspace \to \paths(\ell^\infty)$ is defined as 
\[
  [\iota_{\lscQ}(B)]_k(t) = \int_{-\infty}^t \lambda_k(s) \mrd s.
\] 
\end{Def}
The choice of $\paths(\ell^\infty)$ for the target space above is somewhat arbitrary: we may as well have mapped to $\paths(\ell^p)$ for any $p \in [1,\infty]$ or to $\paths(\RR^n)$ via truncation.  
We now show that $\iota_\lscQ$ inherits stability (in the sense of Theorem~\ref{thm:stability}) from barcodes via their landscapes. 
The two spaces defined below will appear in the proof.

\begin{Def}
For $p,q \in [1,\infty]$, define
\begin{enumerate}
 \item the Banach space $L^{p,q}(\NN\times \RR)$ consisting of all functions $y : \NN \times\RR \to \RR$ for which the following $(p,q)$-norm is finite:
$\norm{y}_{L^{p,q}} = [ \int_{\RR} ( \sum_{k=1}^\infty |y(k,t)|^p)^{q/p} \mrd t]^{1/q},$
and
\item the Sobolev path space $W^{1,q}(\RR,\ell^p)$ consisting of all functions $x:\RR \to \ell^p$ for which there exists some $\dot x \in L^q(\RR,\ell^p)$ such that $x(t) = \int_{-\infty}^t \dot x(s)\mrd s$.
The seminorm of $x$ in this case is defined by $\norm{x}_{W^{1,q}} = \norm{\dot x}_{L^q}$.
  \end{enumerate}
\end{Def}

We remark that one usually defines the Sobolev norm on $W^{1,q}$ as $\|x\|_{L^q} + \|\dot x\|_{L^q}$, while our definition drops the term $\|x\|_{L^q}$.
For paths $x$ defined on a compact interval $[0,T]$ with $x(0)=0$, we note that these norms are equivalent (but on unbounded domains, this is no longer the case).
Our choice of norm is motivated by the upcoming Lemma~\ref{lem:lscisom}.

At special values of $p$ and $q$, the two spaces defined above become more familiar. For instance, $L^{p,p}(\NN \times \RR)$ is the $L^p$ space obtained by equipping $\NN\times \RR$ with the product of the counting and Lebesgue measures, as considered in~\cite[Sec.~2.4]{Bubenik15}.
Similarly, $W^{1,\infty}(\RR, \ell^p)$ is the space of $1$-H{\"older} paths in $\ell^p$, while $W^{1,1}(\RR,\ell^p)$ is the subspace of absolutely continuous paths in $\paths(\ell^p)$. See~\cite[Sec.~1.4]{friz-victoir-book} for further details.
In any case, landscapes in the image of $\Lambda^\bullet$ lie in $L^{p,q}(\NN \times \RR)$ for every possible $p,q \in [1,\infty]$.

\begin{Lem}\label{lem:lscisom}
For all $p,q \in [1,\infty]$, the map $\calI:\Lambda(\star,\bullet) \mapsto \int_{-\infty}^{\bullet}\Lambda(\star, s)\mrd s$ is an isometry 
\[
L^{p,q}(\NN \times \RR) \stackrel{\simeq}{\longrightarrow} W^{1,q}(\RR, \ell^p).
\]
\end{Lem}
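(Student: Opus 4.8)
The plan is to recognize that, once the two domains are described correctly, the asserted isometry is almost definitional: the $(p,q)$-norm is nothing but the $L^q$-norm of an $\ell^p$-valued function, and the $W^{1,q}$-seminorm is by construction the $L^q$-norm of the weak derivative, so that $\calI$ and weak differentiation become mutually inverse isometries. Concretely, I would first set up the \emph{currying} identification
\[
L^{p,q}(\NN \times \RR) \;\stackrel{\simeq}{\longrightarrow}\; L^q(\RR, \ell^p), \qquad y \longmapsto \bigl(t \mapsto y(\cdot, t)\bigr),
\]
observing that $\norm{y}_{L^{p,q}} = \bigl[\int_\RR \norm{y(\cdot,t)}_{\ell^p}^q \mrd t\bigr]^{1/q} = \norm{y(\cdot,\bullet)}_{L^q(\RR,\ell^p)}$ directly from the definitions (with the usual reading of the norm as an essential supremum whenever $p$ or $q$ equals $\infty$). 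This reduces the lemma to the statement that integration from $-\infty$ is an isometric isomorphism $L^q(\RR,\ell^p) \to W^{1,q}(\RR,\ell^p)$.

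Next I would verify the remaining points in turn. For \textbf{well-definedness and the isometry}, given $y \in L^{p,q}$ I set $x = \calI(y)$; then $x(t) = \int_{-\infty}^t y(\cdot,s)\,\mrd s$ has weak derivative $\dot x = y$, so $x \in W^{1,q}$ and $\norm{x}_{W^{1,q}} = \norm{\dot x}_{L^q} = \norm{y}_{L^{p,q}}$, which is exactly the claimed equality of norms. For \textbf{surjectivity}, any $x \in W^{1,q}$ comes by definition equipped with a derivative $\dot x \in L^q(\RR,\ell^p)$ satisfying $x = \int_{-\infty}^{\bullet} \dot x$; under the currying identification $\dot x$ corresponds to some $y \in L^{p,q}$ with $\calI(y) = x$. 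For \textbf{injectivity}, if $\calI(y) = 0$ then $\int_{-\infty}^t y(\cdot,s)\,\mrd s = 0$ for all $t$, and differentiating (Lebesgue differentiation in the Banach-valued setting) forces $y = 0$ almost everywhere.

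The genuinely delicate step is the measure-theoretic matching hidden inside the currying identification, and this is where I expect the main obstacle to lie. Two issues must be addressed. First, for $q > 1$ the primitive $\int_{-\infty}^t y(\cdot,s)\,\mrd s$ need not converge for an arbitrary $L^q$ function on the unbounded domain $\RR$, so well-definedness of $\calI$ is not automatic; one restricts attention to those $y$ whose primitive converges, and in the intended application this is guaranteed because tame landscapes are compactly supported. Second, for $p = \infty$ the space $\ell^\infty$ is non-separable, so passing between joint measurability of $y$ on $\NN \times \RR$ and strong ($\ell^\infty$-valued) measurability of $t \mapsto y(\cdot,t)$ requires care; this too is harmless for landscapes, where for each fixed barcode only finitely many coordinates $\lambda_k$ are nonzero, so the resulting $\ell^p$-valued paths take values in a fixed finite-dimensional subspace. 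Once these identifications are in place the isometry follows with no further computation, since the $W^{1,q}$-seminorm was defined precisely to drop the $\norm{x}_{L^q}$ term and make $\calI$ norm-preserving.
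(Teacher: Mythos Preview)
Your proposal is correct and follows essentially the same route as the paper: factor $\calI$ as the currying isometry $L^{p,q}(\NN\times\RR)\to L^q(\RR,\ell^p)$ followed by the definitional isometry $\int_{-\infty}^\bullet : L^q(\RR,\ell^p)\to W^{1,q}(\RR,\ell^p)$. The paper's proof is in fact terser than yours; the measure-theoretic caveats you raise (convergence of the primitive for $q>1$ on an unbounded domain, strong measurability when $p=\infty$) are genuine but are simply not addressed there, so your discussion goes beyond what the paper provides.
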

\begin{proof}
The map $\varphi : L^{p,q}(\NN\times \RR) \to L^q(\RR,\ell^p)$, $\varphi(y)(t) = \big(y(1,t), (y(2,t), \ldots\big)$, is an isometry:
\begin{equation*}
\norm{y}_{L^{p,q}}^q = \int_\RR\left(\sum_{k = 1}^\infty |y(k,t)|^p\right)^{q/p}  \mrd t ~ = ~ \int_\RR \norm{\varphi(y)(t)}_{\ell^p}^{q} \mrd t = \norm{\varphi(y)}_{L^q}^q.
\end{equation*}
By definition of $W^{1,q}(\RR,\ell^p)$, the map $\int: L^q(\RR, \ell^p) \to W^{1,q}(\RR,\ell^p)$, $x \mapsto \int_{-\infty}^\bullet x(s) \mrd s$, is also an isometry.
The conclusion follows by observing that $\calI = \int\circ\varphi$.
\end{proof}

We now obtain a desirable stability property for the integrated landscape embedding.

\begin{Thm}
\label{thm:lscembstable}
The map $\dataspace \to \paths(\ell^\infty)$ obtained by composing the $i$-dimensional Vietoris-Rips persistent homology of Theorem~\ref{thm:stability} with the integrated landscape embedding of Definition~\ref{def:lscemb}, i.e.,
\[
\dataspace \stackrel{\PH_i}{\longrightarrow} \barspace \stackrel{\Lambda^\bullet}{\longrightarrow} L^\infty(\NN \times \RR) \stackrel{\calI}{\longrightarrow} \paths(\ell^\infty),
\]
is $1$-Lipschitz for every $i \geq 0$. Here $\dataspace$ has the Gromov-Hausdorff metric while $\paths(\ell^\infty)$ has the $1$-H{\"o}lder norm
\[
\|x\|_{\Hol1} := \|x(0)\|_{\ell^\infty} + \sup_{s\neq t} \|x(t)-x(s)\|_{\ell^\infty}|t-s|^{-1}\;.
\]
\end{Thm}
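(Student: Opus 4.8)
The plan is to factor the composite into its three constituent arrows and to track the Lipschitz constant through each, exploiting that two of the three maps are already under control: $\PH_i$ is $1$-Lipschitz by Theorem~\ref{thm:stability}, and $\calI$ is an isometry by Lemma~\ref{lem:lscisom} specialised to $p=q=\infty$. The only genuinely new analytic input is the stability of the landscape assignment $\Lambda^\bullet$, after which the argument reduces to reconciling the three norms appearing along the chain.

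First I would invoke Bubenik's landscape stability~\cite{Bubenik15}: the assignment $\Lambda^\bullet:\barspace \to L^\infty(\NN\times\RR)$ is $1$-Lipschitz when $\barspace$ carries the bottleneck distance and the target the sup-norm $\|y\|_{L^{\infty,\infty}} = \sup_{k,t}|y(k,t)|$; concretely, $\sup_{k,t}|\lambda_k^B(t)-\lambda_k^{B'}(t)| \le d_{\mathrm{bot}}(B,B')$. This is precisely the case $p=q=\infty$ of the two-parameter norm, so $L^\infty(\NN\times\RR)$ in the statement is to be read as $L^{\infty,\infty}$.

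Next, Lemma~\ref{lem:lscisom} with $p=q=\infty$ makes $\calI$ an isometry from $L^{\infty,\infty}(\NN\times\RR)$ onto $W^{1,\infty}(\RR,\ell^\infty)$, whose seminorm is $\|x\|_{W^{1,\infty}}=\|\dot x\|_{L^\infty}$. It then remains to identify this seminorm with the $1$-H\"older expression in the statement. For $x(t)=\int_{-\infty}^t \dot x(s)\,\mrd s$ one has $\|x(t)-x(s)\|_{\ell^\infty}=\big\|\int_s^t \dot x(u)\,\mrd u\big\|_{\ell^\infty}\le |t-s|\,\|\dot x\|_{L^\infty}$, giving one inequality; the reverse follows from a Lebesgue differentiation argument, recovering $\dot x(t)$ a.e.\ as the limit of difference quotients so that $\|\dot x(t)\|_{\ell^\infty}$ is bounded by the H\"older seminorm. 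The extra boundary term $\|x(0)\|_{\ell^\infty}$ in $\|\cdot\|_{\Hol1}$ is harmless here: since every interval of a Vietoris--Rips barcode $B$ lies in $[0,T_B]$, one checks that $\lambda_k^B(t)=0$ for all $t\le 0$, whence $[\iota_{\lscQ}(B)]_k(0)=0$. Thus the image of the composite consists of paths vanishing at $0$, and on differences of such paths the full norm $\|\cdot\|_{\Hol1}$ coincides with the $W^{1,\infty}$ seminorm. (The same support observation shows these paths are eventually constant, hence genuinely of bounded variation, so they indeed lie in $\paths(\ell^\infty)$.)

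Chaining the three estimates --- $1$-Lipschitz, then $1$-Lipschitz, then isometric --- gives that the composite is $1$-Lipschitz, as claimed. The step I expect to demand the most care is the norm bookkeeping rather than any topology: ensuring Bubenik's stability is applied in exactly the $L^{\infty,\infty}$ (sup) norm rather than some $L^{p,p}$ norm, and justifying that the $W^{1,\infty}$ seminorm agrees with the full $1$-H\"older norm on the relevant paths (that is, that the $\|x(0)\|_{\ell^\infty}$ term can be discarded). All topological content --- stability itself --- is inherited from Theorems~\ref{thm:structure} and~\ref{thm:stability} together with Bubenik's work; the role of this theorem is the clean isometric repackaging furnished by Lemma~\ref{lem:lscisom}.
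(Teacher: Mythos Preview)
Your proposal is correct and follows essentially the same approach as the paper: factor the composite as $\PH_i$ ($1$-Lipschitz by Theorem~\ref{thm:stability}), $\Lambda^\bullet$ ($1$-Lipschitz by Bubenik's stability), and $\calI$ (isometric by Lemma~\ref{lem:lscisom} at $p=q=\infty$), using the vanishing $\Lambda^B(k,t)=0$ for $t\le 0$ to dispose of the $\|x(0)\|_{\ell^\infty}$ term. Your additional care in matching the $W^{1,\infty}$ seminorm with the $1$-H\"older norm is more explicit than the paper's treatment but not a different argument.
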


\begin{proof}
It holds that $\PH_i$ is $1$-Lipschitz by Theorem~\ref{thm:stability} and $\Lambda^\bullet$ is $1$-Lipschitz by~\cite[Thm.~13]{Bubenik15}.
Taking $p=q=\infty$ in Lemma~\ref{lem:lscisom} and noting that $\Lambda^B(k,t) = 0$ for all $t\leq 0$, $k \in \NN$, and barcodes $B$, it follows that $\calI$ is also $1$-Lipschitz.
\end{proof}

\subsection{The envelope embedding}\label{subsec:envEmbed}
Consider $B \in \barspace$.
Order the intervals $\{[b_i,d_i)\}_{i=1}^m$ of $B$ in descending order by their lengths $(d_i-b_i)$ (with intervals of equal length ordered by increasing birth times), and embed them into $\RR^2$ as the (disjoint) union 
\[
U(B) = \bigcup_{i=1}^m \{i\} \times [b_i,d_i].
\] 
The {\em upper} envelope $u_B: \RR \to \RR$ of $B$ is the piecewise linear curve obtained by linearly interpolating between the highest points $(i,d_i)$ of $U(B)$ across $i \in \{0,\ldots,m\}$, with $d_0 = 0$ by convention. Similarly, the {\em lower} envelope $\ell_B$ is obtained by interpolating between the lowest points $(i,b_i)$, again with $b_0 = 0$.
Both curves are uniquely extended to have domain $\RR$ by keeping them constant on the intervals $(-\infty,0]$ and $[m,\infty)$.
We illustrate both envelopes in the accompanying figure.

\begin{figure}[h!]
\includegraphics[scale=.4]{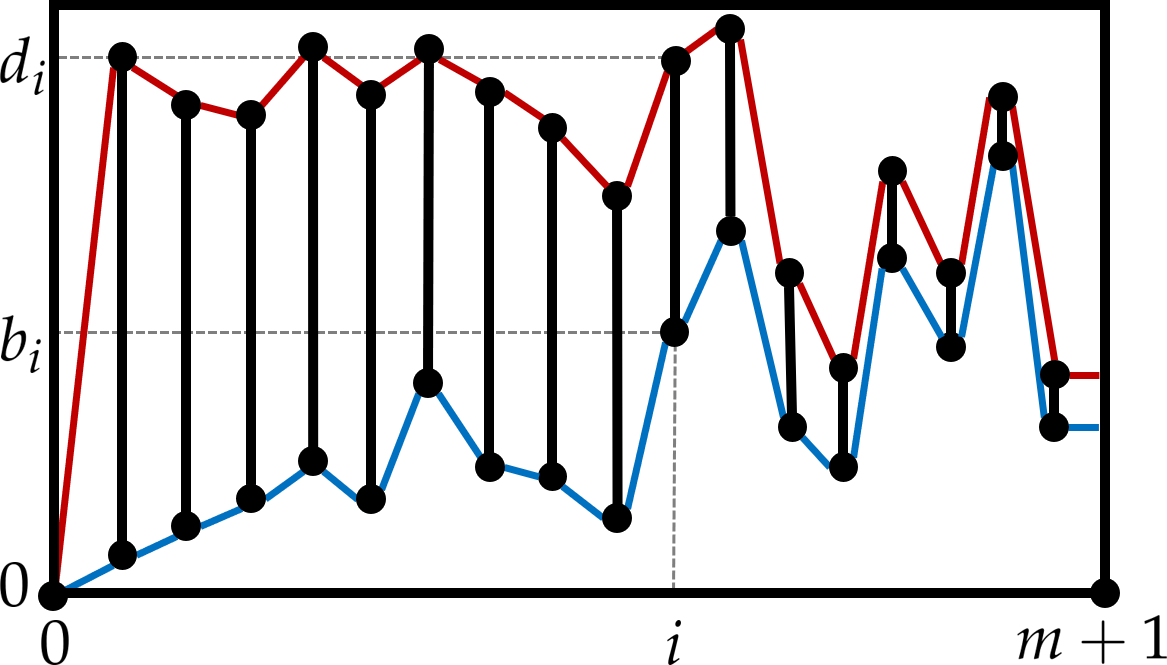}
\end{figure}
\begin{Def}
  The {\bf envelope embedding} is a map $\iota_\env:\barspace \to \paths(\RR^2)$ defined as follows.
  To each barcode $B$, it associates the path $\iota_\env(B): \RR \to \RR^2$, given by
\[
[\iota_\env(B)](t) = \left(\ell_B(t),u_B(t)\right).
\]
Here $u_B$ and $\ell_B$ are the upper and lower envelopes of $B$ as described above.
\end{Def}

For values of $t$ near zero, the upper and lower envelopes, $\ell_B(t),u_B(t)$ capture only the longest, most stable intervals of $B$.
As $t \geq 0$ increases, more of the smaller intervals get included, and the output $\iota_\env(B)$ becomes more volatile to small perturbations of $B$ (in the bottleneck metric). 
This motivates us to truncate after a given time: pick an integer $N \geq 1$ and
let $\iota_\env^N :\barspace \to \paths(\RR^2)$ be the {\em restricted envelope
  embedding} obtained by setting $\iota_\env^N (B)(t)$ equal to
\[
  \begin{cases}
																		\iota_\env(B)(t) & \text{if } t \leq N, \\
																		\iota_\env(B)(N) & \text{if } t > N, \\
													
														\end{cases}
\]
If the feature map associated to this truncated envelope embedding performs well for small values of $N$ and poorly for large ones, then one obtains evidence in favor of the hypothesis that the signal of interest genuinely resides in the larger, more stable intervals.

\subsection{The Betti and Euler embeddings}

In contrast to the previous two subsections, we now consider embeddings which depend on all homological dimensions.

\begin{Def} Denote $B = \bigcup_{i \geq 0} B_i$ where $B_i$ contains all intervals of homological dimension $i$.
Choose an integer $n \geq 1$ and numbers $a^k_i \in \RR$ for $i \geq 0$ and $1 \leq k \leq n$. Setting $a = \{a^k_i\}$, the {\bf generalised Betti embedding} $\beta(B;a) \in \paths(\RR^n)$ is defined as follows.
Let $\{t_j\}_{j=1}^m$ be the (ordered set of) all endpoints of intervals in $B$ (which lie in $[0,T]$) together with $t_1=0$ and $t_m = T$.
For each $j \in \{1,\ldots, m\}$, set
\[
\beta(B;a)(t_j) = \left(\sum_{i \geq 0} a^1_i \beta_i^{t_j}, \ldots, \sum_{i \geq 0} a^n_i \beta_i^{t_j} \right),
\]
where $\beta_i^{t_j}$ is the Betti number of $B$ as in~\eqref{eq:betti}.
We extend the definition to points $t \in (t_j,t_{j+1})$ in a piecewise linear fashion
\[
\beta(B;a)(t) = \beta(B;a)(t_j) + \frac{t - t_j}{t_{j+1}-t_j} \Big(\beta(B;a)(t_{j+1}) - \beta(B;a)(t_j)\Big).
\]

The {\bf Betti embedding} $\iota_\betti (B) \in \paths(\RR^n)$ is defined by setting $a^k_i = 1$ if $k-1=i$ and $a^k_i = 0$ otherwise:
\[
\iota_\betti (B)(t_j) = (\beta^{t_j}_0,\ldots,\beta^{t_j}_{n-1}).
\]
The {\bf Euler embedding} $\iota_\eul(B) \in \paths(\RR)$ is defined by setting $a^1_i = (-1)^i$:
\[
\iota_\eul(B)(t_j) = \sum_{i=0}^n (-1)^i \beta^{t_j}_i.
\]
\end{Def}

\begin{Rem}
When $B$ arises from the persistent homology of a finite metric space, it is computationally convenient to recall that $\iota_\eul B(t_j)$ is the Euler characteristic of the associated Vietoris-Rips simplicial complex $\SC(t_j)$, and is thus also given by an alternating count of simplices across dimension:
\[
\iota_\eul(B)(t_j) = \sum_{i \geq 0} (-1)^i \cdot \#\{i\text{-simplices in }\SC(t_j)\}.
\]
Hence, $\iota_\eul(B)$ can be computed without knowing the actual homology of $\SC$.
In fact, one could further consider generalised simplex embeddings
\[
\sigma^k(B;a)(t_j) = \sum_{i \geq 0} a^k_i \cdot \#\{i\text{-simplices in }\SC(t_j)\},
\]
which can all be computed without knowing the homology of $\SC$ (albeit do not in general capture homological invariants).
\end{Rem}

\begin{Rem}\label{rem:betti_euler}
Since $\iota_\betti$ and particularly $\iota_\eul$ are massive numerical reductions of $B$, it is apparent that metric spaces with very different persistence barcodes might have identical Betti and Euler embeddings.
On the other hand, for barcodes arising from certain popular models of random metric spaces, the expected value of $|\iota_\eul(B)(t)|$ is a remarkably good predictor of the Betti number $\beta_i^t$ for each $t \geq 0$ --- see~\cite[Sec.~5.3]{kahle:rs} and the references therein for details.
\end{Rem}

\subsection{Stability and injectivity}
As mentioned in the introduction, the emergence of a single feature map which is optimal in terms of stability, discriminative power and computability is unlikely since these three properties tend to contravene each other.
Indeed, our persistence path embeddings vary drastically in terms of stability, discriminative power, and computability.
In terms of discriminative power, the (integrated) landscape and envelope embeddings are injective maps from the space of barcodes to spaces of bounded variation paths, but neither Betti or Euler are injective, see Remark \ref{rem:betti_euler}.
In terms of stability, the only embedding that is stable is the (integrated) landscape embedding; for the other embeddings simple counterexamples can be constructed.\footnote{Consider adding small bars $[b_n,d_n)$ where $[b_n,d_n)=[0,n^{-1})$ if $n$ is even, $[b_n,d_n)=[c-n^{-1},c)$ if $n$ odd for a fixed sufficiently large $c>0$.}

\section{From paths to tensors}\label{sec:signatures}

We introduce the second component $\Sig$ for our feature map $\Phi_\bullet = \Sig \circ \iota_\bullet$.
 \begin{Def}
For a Banach space $V$, the signature map is defined as 
  \[
     \Sig: \paths(V) \rightarrow \tensorspace(V) \quad x \mapsto \left(  \Sig_0(x), \Sig_1(x), \ldots\right)
   \]
where $\Sig_0(x) = 1$ and 
 \begin{align*}
		\Sig_{m}(x) := \int_{0 < t_1 < \cdots < t_m < T} \mrd x(t_1) \otimes \mrd x(t_2) \otimes \cdots \otimes \mrd x(t_m) \in V^{\otimes m}
 \end{align*}
 is defined as a Riemann--Stieltjes integral over the $m-1$ simplex of length $T$. 
\end{Def}

 \begin{Ex}[The case $V=\RR^n$]
 For a path $x \in \paths(\RR^n)$, it holds that
 \[
 \Sig_m(x) = \sum_{i_1,\ldots,i_m} \Sig^{i_1,\ldots, i_m}(x) e_{i_1}\otimes \cdots \otimes e_{i_m} \in (\RR^n)^{\otimes m},
 \]
 where the sum is taken over all multi-indexes $(i_1,\ldots,i_m)\in \{1,\ldots,n\}^m$, $(e_{i})_{i=1}^n$ is a basis of $\RR^n$, and
 \[
 \Sig^{i_1,\ldots, i_m}(x) = \int_{0 < t_1 < \cdots < t_m < T} \mrd x_{i_1}(t_1) \mrd x_{i_2}(t_2) \cdots \mrd x_{i_m}(t_m).
 \]
Hence the term $\Sig_m(x)$ can simply be interpreted as a collection of $n^m$ real numbers.
 \end{Ex}

The mapping $\Sig$ is essentially injective up to a natural equivalence class of paths, called tree-like equivalence.\footnote{$x,y$ are tree-like equivalent iff there exists some $\RR$-tree (i.e., a metric space in which any two points are connected by a unique arc which is isometric to a real interval) $\frakT$ so that $x \ast \overleftarrow{y}$, the concatenation of $x$ with the time-reversal of $y$, decomposes as \[[0,T] \stackrel{\phi}{\longrightarrow} \frakT \stackrel{\psi}{\longrightarrow} V\]
  where $\phi$ and $\psi$ are continuous maps with $\phi(0)=\phi(T)$.
  We call a set of continuous paths in $\paths(V)$ \emph{reduced} if none of its distinct elements are tree-like equivalent.
}
\begin{Thm}{\cite{BGLY16}} \label{thm:siguniq}
  Let $x,y \in \paths(V)$. Then $\Sig(x) = \Sig(y)$ iff  $x$ and $y$ are tree-like equivalent.
\end{Thm}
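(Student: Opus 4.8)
The plan is to prove the two implications separately, reducing both to a statement about a single loop via the multiplicative structure of the signature. The algebraic facts I would establish first are Chen's identity $\Sig(x \ast y) = \Sig(x) \otimes \Sig(y)$ and the time-reversal formula $\Sig(\overleftarrow{y}) = \Sig(y)^{-1}$, the inverse being taken in the group of group-like elements of $\tensorspace(V)$. The reverse implication is the easier one: if $x$ and $y$ are tree-like equivalent, then the concatenation $z := x \ast \overleftarrow{y}$ factors as $[0,T] \stackrel{\phi}{\longrightarrow} \frakT \stackrel{\psi}{\longrightarrow} V$ through an $\RR$-tree $\frakT$ with $\phi(0) = \phi(T)$. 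I would show directly that any path factoring through a tree has trivial signature $\Sig(z) = \1$, arguing by induction on the tensor level $m$: the recursive formula for $\Sig_m$ together with the fact that every excursion on the tree is eventually retraced forces each iterated integral to cancel. Combined with the two identities, $\Sig(x) \otimes \Sig(y)^{-1} = \1$ yields $\Sig(x) = \Sig(y)$.

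For the forward implication I would again invoke Chen's identity: since equal signatures force matching endpoints, after a translation the concatenation $z = x \ast \overleftarrow{y}$ is a genuine loop, and $\Sig(x) = \Sig(y)$ is equivalent to $\Sig(z) = \1$. Thus the whole theorem reduces to the core assertion that \emph{a bounded variation path with trivial signature is tree-like}. Here I would use the notion of tree reduction: every $z \in \paths(V)$ admits an essentially unique tree-reduced representative $\hat z$, obtained by erasing all backtracking, which is tree-like equivalent to $z$ and satisfies $\Sig(z) = \Sig(\hat z)$. It therefore suffices to prove the contrapositive non-vanishing statement: if $\hat z$ is tree-reduced and non-constant, then $\Sig(\hat z) \neq \1$.

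This non-vanishing is the heart of the matter and the step I expect to be the main obstacle. Following Hambly--Lyons, the strategy is to parametrize $\hat z$ by arc length and study its \emph{hyperbolic development}: for a chosen linear functional and a scaling parameter $\lambda$ one solves the linear differential equation driving a curve in $SL_2(\RR)$ and estimates the hyperbolic distance it travels. The signature governs this development through its Taylor coefficients in $\lambda$, so $\Sig(\hat z) = \1$ would force the developed curve to return to the identity for every $\lambda$; a lower bound on the hyperbolic displacement, growing like $\lambda$ times the length of $\hat z$, then contradicts this unless the length vanishes.

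The genuine difficulty in the generality of the theorem is that the hyperbolic development is intrinsically two-dimensional, whereas $V$ is an arbitrary Banach space. To bridge this gap I would reduce to finite-dimensional projections of $\hat z$ and control the behaviour of the tree reduction under projection, or else replace the geometric development by the more robust analytic growth estimates of \cite{BGLY16}, tracking their dependence on the projection carefully enough to conclude in full Banach-space generality. Making this non-vanishing estimate uniform across projections is where essentially all of the real work resides.
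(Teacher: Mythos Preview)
The paper does not prove this theorem at all: it is stated with the citation \cite{BGLY16} and used as a black box, so there is no proof in the paper to compare your proposal against. Your sketch is a faithful outline of the actual argument in the cited literature --- Chen's identity and the time-reversal formula to reduce to a single loop with trivial signature, then the Hambly--Lyons hyperbolic development estimate to show a non-constant tree-reduced path has non-trivial signature, with the Banach-space extension handled as in \cite{BGLY16} --- but reproducing that proof is well beyond the scope of this paper, which simply quotes the result.
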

Tree-like equivalence can be a useful equivalence relation, e.g.~it identifies paths that differ only by time-parametrization.\footnote{$x$ and $x'=(x(\varphi(t)))$ are tree-like equivalent for any time-change $\varphi$.}
For our applications, it is instructive to think of $\Sig_m(x)$ as the natural generalisation of the monomial of order $m$ of a vector $v \in V$ to pathspace.
\begin{Ex}
  Let $v \in \RR^n$ and consider the path $x(t)=tv$ where $t \in [0,1]$.
The components of its signature are given by
\[
  \Sig_m(x) = \int \mrd(vt_1) \otimes \cdots \mrd(v t_m) = \int v^{\otimes m} \mrd t_1 \cdots \mrd t_m = \left(\frac{v^{\otimes m}}{m!}\right),
\] 
Thus $\Sig(x)$ indeed recovers the moment map $v \mapsto (1,v,\frac{v^{\otimes 2}}{2!},\ldots)$.
\end{Ex}
A further useful similarity between the signature map and monomials is that the space of linear functions of the signature is closed under multiplication, which is commonly known as the shuffle identity.
\begin{Lem}[Shuffle identity]\label{lem:shuffle}
Suppose $\ell_i\in (V')^{\otimes m_i}$ for $i=1,2$ (where $V'$ is the continuous dual space of $V$).
Then there exists $\ell_3 \in (V')^{\otimes (m_1+m_2)}$ such that for all $x \in \paths(V)$
\[
\langle \Sig(x),\ell_1\rangle \langle \Sig(x),\ell_2\rangle = \langle \Sig(x), \ell_3 \rangle
\]
\end{Lem}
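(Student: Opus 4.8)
The plan is to prove the shuffle identity by establishing a recursive integration-by-parts formula for products of iterated integrals, then identifying the resulting sum of higher-order iterated integrals with a single linear functional on the signature. The key combinatorial fact is that the pointwise product of two iterated integrals, one of order $m_1$ and one of order $m_2$, decomposes as a sum of $\binom{m_1+m_2}{m_1}$ iterated integrals, each of order $m_1+m_2$, indexed by the shuffles (interleavings) of the two index sets that preserve the internal order of each.

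\emph{First} I would reduce to the case where $\ell_1$ and $\ell_2$ are pure tensors of dual vectors, say $\ell_1 = f_1 \otimes \cdots \otimes f_{m_1}$ and $\ell_2 = g_1 \otimes \cdots \otimes g_{m_2}$ with each $f_i, g_j \in V'$. This suffices because both sides of the claimed identity are bilinear in $(\ell_1, \ell_2)$, so the general case follows by extending $\ell_3$ linearly. Under this reduction, $\langle \Sig(x), \ell_1 \rangle$ is the scalar iterated integral $\int_{0 < t_1 < \cdots < t_{m_1} < T} \mrd(f_1 \circ x)(t_1) \cdots \mrd(f_{m_1} \circ x)(t_{m_1})$, and similarly for $\ell_2$; each $f_i \circ x : [0,T] \to \RR$ is a scalar path of bounded variation.

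\emph{Next}, the heart of the argument is the shuffle product formula for scalar iterated Riemann–Stieltjes integrals. Writing $y_i = f_i \circ x$ and $z_j = g_j \circ x$, I would show
\[
\left( \int_{\Delta_{m_1}} \mrd y_1 \cdots \mrd y_{m_1} \right) \left( \int_{\Delta_{m_2}} \mrd z_1 \cdots \mrd z_{m_2} \right) = \sum_{\sigma \in \operatorname{Sh}(m_1,m_2)} \int_{\Delta_{m_1+m_2}} \mrd w_{\sigma(1)} \cdots \mrd w_{\sigma(m_1+m_2)},
\]
where $(w_1,\ldots,w_{m_1+m_2}) = (y_1,\ldots,y_{m_1},z_1,\ldots,z_{m_2})$ and $\operatorname{Sh}(m_1,m_2)$ is the set of $(m_1,m_2)$-shuffles. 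The clean way to do this is by induction on $m_1 + m_2$: the product of the two top-level integrals, viewed as functions of the upper limit $T$, satisfies the product rule $\mrd(FG) = F\,\mrd G + G\,\mrd F$, and each term $F\,\mrd G$ and $G\,\mrd F$ peels off the last integration variable and reduces to a product of strictly shorter iterated integrals, to which the inductive hypothesis applies. Reassembling the two resulting shuffle sums recovers exactly the shuffles of the full index set. Finally I would set $\ell_3 = \sum_{\sigma \in \operatorname{Sh}(m_1,m_2)} \sigma \cdot (\ell_1 \otimes \ell_2) \in (V')^{\otimes(m_1+m_2)}$, the symmetrized sum over shuffle permutations of the tensor factors, so that $\langle \Sig(x), \ell_3 \rangle$ matches the right-hand side above.

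\emph{The main obstacle} I anticipate is justifying the integration-by-parts / product rule at the level of Riemann–Stieltjes integrals for paths of bounded variation rather than smooth paths, since the iterated integrals here are defined by Riemann–Stieltjes integration. One must verify that $\mrd(FG) = F\,\mrd G + G\,\mrd F$ holds in the Stieltjes sense, which requires the standard integration-by-parts theorem for functions of bounded variation (valid because $F$ and $G$ are each continuous of bounded variation, so no common jump discontinuities cause trouble). Bookkeeping the shuffle combinatorics in the inductive step — checking that splitting off the final variable from each of the two subproducts produces precisely the shuffles fixing the position of the last letter — is routine but must be done carefully to ensure no interleaving is double-counted or omitted.
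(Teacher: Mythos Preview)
Your proposal is correct and follows the standard route to the shuffle identity: reduce by bilinearity to pure tensors, then prove the shuffle product formula for scalar iterated Riemann--Stieltjes integrals by induction on $m_1+m_2$ using the integration-by-parts rule $\mrd(FG)=F\,\mrd G + G\,\mrd F$ for continuous bounded-variation integrators, and finally package the resulting shuffle sum as $\ell_3 = \sum_{\sigma\in\operatorname{Sh}(m_1,m_2)} \sigma\cdot(\ell_1\otimes\ell_2)$. The obstacle you flag is genuine but minor: since $F$ and $G$ are continuous (being iterated integrals of a continuous bounded-variation path), the Stieltjes integration-by-parts formula holds without correction terms.

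The paper itself does not give a proof at all; it simply refers the reader to \cite[Thm.~2.5]{lyons-04} for the finite-dimensional case and to \cite[Cor.~3.9]{CDLL16} for the general Banach space case. What you have written is essentially the argument one finds in those references (in particular the Lyons notes), so your approach is not a different route so much as an unpacking of the citation. The only thing the cited general reference adds is care with the tensor-product topology when $V$ is infinite-dimensional, so that $(V')^{\otimes m}$ genuinely acts on $V^{\otimes m}$ and the reduction to pure tensors is justified; you may wish to remark on this if you intend to cover the infinite-dimensional case in full.
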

\begin{proof}
See~\cite[Thm.~2.5]{lyons-04} for an elementary proof when $V$ is finite dimensional, and~\cite[Cor.~3.9]{CDLL16} for the general case.
\end{proof}
\begin{Rem}
The linear functional $\ell_3$ is known as the shuffle product of $\ell_1$ and $\ell_2$.
\end{Rem}
In light of Lemma~\ref{lem:shuffle}, one can ask whether linear combinations of such ``monomials'' $S_m(x)$ are dense in a space of functions, and, whether the sequence of expected ``moments'' characterizes the law of the random path.
For compact subsets of $\paths(V)$, the answer to both questions is yes, as we shall see in Theorem~\ref{thm:feature}, and follows by a standard Stone--Weierstrass argument; the general, non-compact case is more subtle (cf. classical moment problem) but is known to be true under suitable integrability conditions~\cite{chevyrev2016characteristic}.

  \section{Statistical learning}\label{sec:stat learning}
 
 In this section we discuss the problem of statistical learning on the space of barcodes $\barspace$.
The space $\dataspace$ has traditionally been of more interest; however, since the persistent homology map $\PH : \dataspace \to \barspace$ from Section~\ref{subsec:persistence} is well-understood, we focus here on $\barspace$. Results for $\barspace$-valued random variables pull back along $\PH$ to results for $\dataspace$-valued random variables.

We are interested in two standard learning problems: given independent random samples $B_1, \ldots,B_k\sim B$ of a $\barspace$-valued random variable $B$, our aim is to
  \begin{enumerate}[label=(\roman*)]
  \item\label{pt:function learning} 
  learn a function $f(B)$ of
  the data $f: \barspace \to \RR$, and
  \item\label{pt:measure characteristed}
 characterize the law $\mu$ of the data $B\sim \mu$. 
  \end{enumerate}
  As mentioned in the introduction, the standard approach to both problems is to find a feature map $\Phi:\barspace \to \tensorspace(V)$ which is universal and characteristic (addressing points~\ref{pt:function learning} and~\ref{pt:measure characteristed} respectively).
Let us establish these properties for our feature map
\[
\Phi=\Sig\circ \iota :\barspace\rightarrow\tensorspace(V),
\]
where $V$ is a Banach space, $\iota$ is any persistence path embedding (e.g., one of the maps from Section~\ref{sec:persistence}) and $\Sig$ is the signature map of Section~\ref{sec:signatures}. 
Due to the injectivity of $\Sig$ (up to tree-like equivalence), $\Phi$ preserves essentially all the information captured by $\iota$.
  In particular, if $\iota$ maps some domain $\calD \subset \barspace$ injectively into the space of tree-reduced paths (as is the case for any embedding once time is added as a coordinate), then $\Phi$ is also injective on $\calD$.
  To make this precise, we use suitable quotient spaces.
  \begin{Def}
    For $\iota:\barspace\rightarrow \paths(V)$, define the equivalence relation $B \sim_\iota B'$ iff
$\iota(B)$ and $\iota(B')$ are tree-like equivalent.
Let $\barspace\quot\iota$ denote the quotient of $\barspace$ under $\sim_\iota$, and equip $\barspace\quot\iota$ with the initial topology with respect to the map
\[
\barspace\quot\iota \to \paths(V)\quot\sim_t,\quad [B] \mapsto [\iota(B)],
\]
where $\sim_t$ denotes tree-like equivalence and $\paths(V)\quot\sim_t$ is equipped with the quotient topology (recall that $\paths(V)$ bears the $1$-variation topology).
  \end{Def}
  \begin{Thm}\label{thm:feature}
   Define
    \begin{align*}
    \Phi:\barspace\quot\iota \rightarrow \tensorspace(V),\quad B\mapsto \Sig\circ \iota (B).
    \end{align*}
    On each compact subset $K \subset \barspace\quot\iota$, the map $\Phi$ has the following properties. 
\begin{enumerate}[label=(\arabic*)]
\item {(\bf Universal)} \label{pt:uni} Let $f:K \to \RR$ be continuous. For each $\epsilon>0 $, 
there exists $\ell$ in $\bigoplus_{m\geq0}
(V')^{\otimes m}$ (the dual space of the tensor algebra) such that
\[
\sup_{B\in K}|f(B)-\langle \Phi(B),
\ell \rangle | < \epsilon.
\]
\item {(\bf{Characteristic})} \label{pt:char}
Denoting by $\meas$ the set of Borel probability measures on $K$, the map
\[
\meas \to \tensorspace(V), \quad \mu \mapsto \EE_{B\sim \mu}\left[ \Phi(B) \right]
\]
is injective.
\item ({\bf Kernelized}) \label{pt:kernelised} Suppose further that $V$ is a Hilbert space.
Then the map
      \begin{align*}
      k:K\times K\rightarrow \RR, \quad k(B, B')=\langle \Phi(B),\Phi(B') \rangle
    \end{align*}
    defines a bounded, continuous kernel\footnote{A kernel on a set $\calX$ is a positive definite map $\calX\times \calX \rightarrow \RR$.
    The completion $\calH$ of $\calH_0:=\{k(x,\cdot):x \in \calX\} \subset \RR^\calX$ with respect to the inner product $\langle k(x,\cdot), k(y, \cdot)\rangle:= k(x,y)$ forms a so-called reproducing kernel Hilbert space.
      A kernel is called universal for a topological vector space $\calF\subset \RR^{\calX}$ if $\calH_0$ embeds continuously into a dense subspace of $\calF$ and called universal if the transpose map $\calF' \to \calH$ is injective, see~\cite{SGS18} for details.} which is universal for the space of continuous functions $C(K, \RR)$ and
    characteristic for Borel probability measures on $K$.
    \end{enumerate}
  \end{Thm}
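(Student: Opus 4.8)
The plan is to realize the linear functionals of the signature as a point-separating subalgebra of $C(K,\RR)$ and then invoke the Stone--Weierstrass theorem. Concretely, I would set
\[
\calA = \left\{\, B\mapsto\langle\Phi(B),\ell\rangle \ :\ \ell\in\textstyle\bigoplus_{m\geq0}(V')^{\otimes m}\,\right\}.
\]
First I would check that $\calA\subseteq C(K,\RR)$: the signature $\Sig$ is continuous in the $1$-variation topology and descends to $\paths(V)\quot\sim_t$ by Theorem~\ref{thm:siguniq}, so the composite $\Phi$ is continuous for the initial topology placed on $\barspace\quot\iota$, and since each $\ell$ is supported on finitely many tensor levels and acts there by bounded functionals, $\langle\Phi(\cdot),\ell\rangle$ is continuous. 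Then $\calA$ is a subalgebra: it is plainly a linear space, it contains the constants (take $\ell\in(V')^{\otimes0}=\RR$, using $\Sig_0\equiv1$), and it is closed under pointwise products by the shuffle identity (Lemma~\ref{lem:shuffle}) applied bilinearly, the shuffle product of two finitely-supported functionals being again finitely supported.

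The crucial step is point separation. Here I would use that $\Phi$ is injective on $\barspace\quot\iota$: if $[B]\neq[B']$ then $\iota(B)$ and $\iota(B')$ are not tree-like equivalent, so $\Sig(\iota(B))\neq\Sig(\iota(B'))$ by Theorem~\ref{thm:siguniq}, whence $\Sig_m(\iota(B))\neq\Sig_m(\iota(B'))$ at some level $m$. Since $(V')^{\otimes m}$ separates points of $V^{\otimes m}$ (via Hahn--Banach on $V$), there is an $\ell$ with $\langle\Phi(B),\ell\rangle\neq\langle\Phi(B'),\ell\rangle$. Stone--Weierstrass then gives that $\calA$ is uniformly dense in $C(K,\RR)$, which is exactly assertion~\ref{pt:uni}. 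For~\ref{pt:char}, suppose $\mu,\nu\in\meas$ satisfy $\EE_\mu[\Phi]=\EE_\nu[\Phi]$ in $\tensorspace(V)$ (well-defined since $\Phi$ is bounded on the compact $K$). Pairing componentwise with each $\ell$ gives $\int_K g\,\mrd\mu=\int_K g\,\mrd\nu$ for every $g\in\calA$, hence for all $g\in C(K,\RR)$ by the density just established, and regularity of Borel measures on the compact space $K$ forces $\mu=\nu$.

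For~\ref{pt:kernelised}, when $V$ is Hilbert each $V^{\otimes m}$ is a Hilbert space, and the factorial bound $\|\Sig_m(x)\|\leq\|x\|_{\var1}^m/m!$ places $\Phi(B)$ in the $\ell^2$-direct sum Hilbert space $\calH_{\mathrm{feat}}$, so $k(B,B')=\langle\Phi(B),\Phi(B')\rangle$ converges; as a Gram map it is positive definite, and it is bounded and continuous because $B\mapsto\|\Phi(B)\|$ is continuous on the compact $K$. For universality I would identify the RKHS: via the Riesz isomorphism $V'\cong V$, every $\ell$ gives $w_\ell\in\calH_{\mathrm{feat}}$ with $\langle\Phi(\cdot),\ell\rangle=\langle\Phi(\cdot),w_\ell\rangle$, and this function depends only on the orthogonal projection of $w_\ell$ onto $\overline{\spn}\{\Phi(B):B\in K\}$, so it lies in the RKHS $\calH$. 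Hence $\calA\subseteq\calH$; boundedness of $k$ makes the RKHS norm dominate the sup norm, so $\calH_0$ is sup-dense in $\calH$, which contains the $C(K,\RR)$-dense $\calA$. This yields universality, and characteristicness of $k$ follows from~\ref{pt:char} (equivalently as the standard consequence of universality).

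I expect the main obstacle to be the point-separation step together with the bookkeeping required to keep the separating functionals inside the \emph{finitely-supported} dual $\bigoplus_{m}(V')^{\otimes m}$ rather than the full dual of $\tensorspace(V)$, and, for~\ref{pt:kernelised}, correctly identifying exactly which functions $\langle\Phi(\cdot),w\rangle$ belong to the reproducing kernel Hilbert space generated by $k$.
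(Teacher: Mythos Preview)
Your proposal is correct and follows essentially the same route as the paper: assemble the linear signature functionals into a point-separating subalgebra of $C(K,\RR)$ (using continuity of $\Sig$, injectivity from Theorem~\ref{thm:siguniq}, and the shuffle identity), apply Stone--Weierstrass for~\ref{pt:uni}, and deduce~\ref{pt:char} by duality with Radon measures. The only difference is in~\ref{pt:kernelised}: the paper simply invokes a general fact about inner-product kernels (that $k(B,B')=\langle\Phi(B),\Phi(B')\rangle$ is universal/characteristic iff the feature map $\Phi$ is, citing~\cite[Prop.~E.3]{CO18Sigs}), whereas you unpack this by exhibiting $\calA\subseteq\calH$ via the Riesz identification and the projection onto $\overline{\spn}\{\Phi(B)\}$; your version is more self-contained and is correct.
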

  \begin{proof}
By Theorem~\ref{thm:siguniq}, the continuity of the signature map in the $1$-variation topology~\cite[Thm.~3.10]{lyons-04}, and the definition of $\barspace\quot\iota$, it follows that the map $\Phi$ is continuous and separates the points of $\barspace\quot\iota$.
Combining these properties with Lemma~\ref{lem:shuffle} shows that the set of fuctions $\{B \mapsto \langle \Phi(B), \ell \rangle: \ell \in \bigoplus_{m\ge 0} \left(  V' \right)^{\otimes m} \}$ is a point-separating subalgebra of $C(K,\RR)$, hence Point~\ref{pt:uni} follows from the Stone--Weierstrass theorem.

Point~\ref{pt:char} follows by duality: the dual of $C(K,\RR)$ are the Radon measures on $K$ (these include the Borel probability measures) and universality implies that the map $\ell \mapsto (B \mapsto \langle \Phi(B), \ell \rangle)$ is dense in this dual; hence, every Radon measure is characterized by
\begin{multline*}
\Big\{  \int_K \langle \Phi(B), \ell \rangle d\mu(B):\ell \in \bigoplus_{m=0}^\infty  \left(  V'\right)^{\otimes m} \Big\}
= \Big\{  \Big\langle \int_K \Phi(B) d\mu(B), \ell \Big\rangle :\ell \in \bigoplus_{m=0}^\infty \left( V' \right)^{\otimes m}\Big\}.
\end{multline*}

In Point~\ref{pt:kernelised} the boundedness follows from continuity of $\Phi$ and compactness of $K$.
Finally, every inner product kernel $k(B,B')=\langle \Phi(B),\Phi(B') \rangle$ is universal [resp.~characteristic] if and only if the feature map $\Phi$ is universal [resp.~characteristic]; this follows from a general argument about reproducing kernels, see for example~\cite[Prop.~E.3]{CO18Sigs} for details.
\end{proof}
The computational bottleneck for $\Phi$ is typically the calculation of the signature, since an element of $\tensorspace(V)$ truncated at level $M$ needs $O(l\dim(V)^M)$ real numbers if $\iota(B)$ is assumed piecewise linear with at most $l$ time points.
This gets prohibitively large for moderate dimensions.
By contrast, the kernel on $\barspace$, $k(B,B')=\langle\Phi(B),\Phi(B')\rangle$, is defined via the canonical inner product on $\tensorspace(V)$. 
Using~\cite{2016arXiv160108169K}, the level-$M$ approximation to $k(B,B')$ can be computed in $O(l^2c^2M)$ time\footnote{The low-rank approximation algorithm in~\cite{2016arXiv160108169K} reduces this cost further to $O(lcM)$ time and $O(M)$ memory.} and $O(l^2)$ memory,  where $c$ is the cost of evaluating one inner product in $V$. 
\subsection{Hyperparameters}\label{subsec:hyperpar}
Each of the feature maps $\Phi_\bullet$ naturally generalises to a parametrised feature map $\Phi_\bullet^\pi$ where $\pi$ denotes a set of parameters (which will be typically chosen by the learning algorithm). 
The set of parameters $\pi = \{M,\tau,\Delta,\phi\}$ are 
\begin{itemize}
\item $M \in \NN$ is the {\em truncation level} of $S$, meaning that we only consider the first $M$ components $(\Sig_0(x),\Sig_1(x),\ldots, \Sig_M(x))$ of every signature $\Sig(x)$.
\item $\tau \in \{0,1\}$ is the {\em time-augmentation} parameter. If it is non-zero, then we replace each path $x(t) \in \paths(V)$ by the path $(t,x(t)) \in \paths(\RR \times V)$ before computing its signature.
\item $\Delta \in V^l$, for some $l \geq 0$, is a {\em lag vector} containing non-negative real numbers. We replace $x(t) \in \paths(V)$ by \[\big(x(t),x(\max(t-\Delta_1,0)),\ldots,x(\max(t-\Delta_l,0))\big)\] in $\paths(V^{l+1})$ before computing its signature.
\item a {\em non-linearity} $\varphi:V\rightarrow W$. Since $\iota_\bullet$ composed with a sufficiently regular map $\varphi:V \rightarrow W$ to another vector space $W$, lifts barcodes to $\paths(W)$.
  If $\varphi$ is injective and non-linear, one expects to obtain more efficient learning from signatures of paths in $\paths(W)$ since more non-linearities are provided by $\Sig\circ \varphi \circ \iota_\bullet (B) \in \tensorspace(W)$ than by $ \Sig \circ \iota_\bullet \in \tensorspace(V)$.\footnote{Generically $W$ will be very high- or infinite-dimensional which prevents the direct calculation of $\Sig\circ \varphi \circ \iota_\bullet$.
    But if $\varphi$ is the feature map of a kernel on $V$, (i.e.~$(W,\kappa)$ is a reproducing kernel Hilbert space over $V$ with kernel $\kappa(u,v):=\langle \varphi(u),\varphi(v) \rangle$), then the signature kernelization~\cite{2016arXiv160108169K} still allows to compute the kernel $k^\pi(B,B')= \langle \Phi^\pi(B),\Phi^\pi (B') \rangle$.
  Typically we choose $V=\RR^n$ and use a classic kernel on $\RR^n$ for $\kappa$ such as the RBF kernel.}
\end{itemize}
For the choice $M= \infty$, $\tau = 0$, $\Delta = 0$ (with $l=0$) and $\varphi =\text{id}:V \to V$, the corresponding $\Phi^\pi$ recovers $\Phi$, and $k^\pi(B,B')= \langle \Phi^\pi(B),\Phi^\pi (B') \rangle$ recovers $k$.
With slight abuse of notation, we write $\Phi$ for $\Phi^\pi$ and $k$ for $k^\pi$ for the remainder of the article.

\section{Experiments}
\label{sec:experiments}
We evaluate our feature map on three supervised classification tasks: orbits, textures, and shapes. 
These are common benchmarks, taken from recent papers\footnote{
  It can be hard to replicate reported results in the literature, since the preprocessing is often not fully specified; e.g.~on OUTEX various downsampling methods combined with CBLP are possible before barcodes are computed.
  We used the same data set of barcodes and train-test split for all experiments to allow for a fair comparison.}
and are described in Figures~\ref{fig:orbit},~\ref{fig:outex}, and~\ref{fig:shapes}.
For kernel methods $k_\bullet$ we used a support vector classifier and for feature maps $\Phi_\bullet$ we used a random forest classifier.
For kernel methods we used the same Nystr\"om approximation to deal with the quadratic growth of the Gram matrix.
\subsection{Computational complexity}
For a persistence path $\iota_\bullet(B) \in \paths(\RR^n)$, truncation of the feature map $\Phi_\bullet(B)$ at tensors at level less than or equal to $M$ gives $O(n^M)$ coordinates (again, akin to classic polynomial features in machine learning). 
This combinatorial explosion limits the choice of $\iota \in \{\iota_\env, \iota_\eul, \iota_\betti\}$ for $\Phi_\iota$ in practice: the integrated landscape embedding $\iota_\lscQ$ produces paths in $\paths(\RR^n)$ for large $n$, which rules out $\Phi_\lscQ$.
However, the kernelization $k_\bullet$ needs constant memory and computation time $O(Mn)$; in practice this allows us to evaluate up to $n \approx 10^3$ on standard laptops~\cite{2016arXiv160108169K}; thus $k_\bullet$ can be efficiently computed for all $\iota_\bullet$'s as long as the persistence paths are not too long (long persistence paths are in principle possible by using the low-rank algorithm from~\cite{2016arXiv160108169K} which we did not implement).
On the other hand $\Phi_\env(B)$, $\Phi_\eul(B)$, and $\Phi_\betti(B)$ are fast to compute directly since their $\iota$'s produce low-dimensional paths.
\subsection{Implementation}
We implement our feature map $\Phi_\bullet$ and the kernel $k_\bullet$ in Python's sci-kit learn package~\cite{scikit-learn}.
As part of this, we use for the signature kernel computation $k_\bullet$ the legacy code from~\cite[Alg.~3]{2016arXiv160108169K}, but we did not implement the low-rank algorithm (which limits $k_\bullet$ to paths with $\le 300$ time ticks on laptops, which is the reason why we do not report results for $k_\eul$ and $k_\betti$ on the Orbits dataset); for $\Phi_\bullet|_M=\Sig|_M\circ \iota_\bullet$ we make use of Terry Lyons' ESIG package to compute the truncated signature $\Sig|_M$.

To allow for a fair comparison, we also implemented the sliced Wasserstein kernel $k_{\SW}$~\cite{2017arXiv170603358C} and the persistence image feature map~\cite{adams2017persistence} combined with a random forest.
This allows for a reasonable benchmarking since~\cite{2017arXiv170603358C,adams2017persistence} compares performance of $k_{\SW}$ and $\Phi_{\PI}$ against a number of other methods --- see~\cite[Sec.~4.2 and~4.3]{2017arXiv170603358C} and~\cite[Sec.~6]{adams2017persistence}.

\subsection{Hyperparameter tuning}
We use a grid search and $5$-fold cross-validation for parameter tuning.
The hyperparameter grid for $\Phi^\pi_\bullet$ consists of parameters $\pi$ that are: the truncation level $M$ in the tensor algebra and time-augmentation parameter $\tau$; additionally the parameters of the classifiers were used, and if the kernelized version is used, then additionally the parameters of the kernel on $V$ (we used throughout the Gauss kernel so this is just the length scale $\sigma >0$).
For the envelope embedding $\env$ a further grid search was performed over the restriction parameter $N$ described in Section~\ref{subsec:envEmbed}.
Adding lags described in Section~\ref{subsec:hyperpar} is possible but was not tested here.
\subsection{Performance}
Table~\ref{table_accuracy} reports the mean accuracy of repeating each experiment 20 times together with the standard deviation.
As benchmark we report performance of the sliced Wasserstein kernel $k_{\SW}$ and the persistence image features $\Phi_{\PI}$.
For $k_{\SW}$, we used the approximation given in~\cite[Alg.~1]{2017arXiv170603358C} with six directions.
For $\Phi_{\PI}$, we used the \texttt{persim} package\footnote{\url{https://github.com/sauln/persim}} with the Gauss kernel, linear weight function, and a grid search over the number of pixels and variance $\sigma>0$.
\begin{table}[h!]
\renewcommand{\arraystretch}{1.3}
\caption{Mean accuracy ($\pm$ standard deviation). }
\label{table_accuracy}
\centering
            \begin{tabular}{l|l|l|l}
             Method & Textures & Orbits & Shapes\\ 
              \hline\hline
              $k_{\SW}$&  $96.8 \pm 1.0$   &  $94.6\pm 1.3$ &   $95.8\pm 1.6$\\
              $\Phi_\PI$ &  $93.7\pm 1.0$   &  $\mathbf{99.86\pm 0.21}$ &   $90.3\pm 2.3$\\
              \hline
              $k_\env$  &  $90.4 \pm 1.5$&   $96.6\pm 0.9$&   $92.7\pm 1.5$ \\
              $k_\eul$  &  $94.9 \pm 0.6$&   NA &   $92.4\pm3.0$ \\ 
              $k_\betti$ & $\mathbf{97.8\pm 0.2}$ & NA & $93.0\pm 3.0$\\
              $\Phi_\env$ & $88.1\pm0.8$ & $98.1 \pm 1.0$ & $95.0 \pm 0.9$\\  
              $\Phi_\eul$ & $92.9 \pm 0.7 $ & $98.8 \pm 0.6 $ & $98.0 \pm 1.1 $\\
              $\Phi_\betti$ & $96.6 \pm 0.6 $ & $97.7 \pm 0.8 $ & $\mathbf{98.1 \pm 0.7}$
            \end{tabular}
\end{table}

Table~\ref{table_accuracy} shows that our approach performs very competitively, achieving state-of-the art in two common benchmarks.
To the best of our knowledge, the Betti embedding $\iota_\betti$ beats the state-of-the-art for shapes in feature form and achieves close to state-of-the-art for textures in kernelized form.
This is encouraging since both $k_{\SW}$ and $\Phi_{\PI}$ provide very competitive benchmarks.
\begin{Rem}
 Our implementation of $k_{\SW}$ achieves better accuracy, particularly for Orbits, than reported in the original papers~\cite{2017arXiv170603358C}, where it is $96.1 \pm 0.4$ for Textures and $83.7 \pm 0.5$ for Orbits.
We believe this is due to a different Gram matrix approximation (we use Nystr\"om for all kernel methods).
Similarly $\Phi_{\PI}$ drastically outperformed the results for the same orbits experiment in~\cite{adams2017persistence}; we believe this is due to our choice of classifier, namely a random forest (vs. discriminant subspace ensemble).
\end{Rem}
The best results for $k_\SW$, $\Phi_\PI$, and $\env$ were achieved by looking at the 0th homology for texture and orbits, and 1st homology for shapes (we did not combine homologies of different dimensions for these methods).

The values $M$ for the optimal truncation level in the tensor algebra (as chosen by cross-validated gridsearch) is given in Table~\ref{Tab:results all};
we ran all tests for $1 \leq M \leq 8$.
The features $\Phi_\bullet$ performed best at higher levels, while $k_\bullet$ performed best at lower levels. 
We suspect that the reason is that the Gaussian kernel non-linearity that lifts barcodes to paths in infinite-dimensional spaces, allows to capture the needed information already on level $2$ or $3$.
We also ran the same experiments for the landscape\footnote{We employed~\cite[Alg.~1]{landtool} to compute persistence landscapes}
and the naive embedding, $\iota_\lsc$ and $\iota_\naive$, but the results were not competitive on either dataset.

\begin{table}[h!]
\renewcommand{\arraystretch}{1.3}
       \caption{Best truncation level $M$.}
            \label{Tab:results all}

            \centering
            \begin{tabular}{l|l|l|l}
              $M$& Textures & Orbits & Shapes\\ 
              \hline\hline
              $k_\env$  &  $2$&   $2$&   $2$ \\
              $k_\eul$  &  $3$&   NA &   $2$ \\ 
              $k_\betti$ & $3 $ & NA & $2$\\
              $\Phi_\env$ & $5$ & $8$ & $4$\\  
              $\Phi_\eul$ & $8$ & $7$ & $7$\\
              $\Phi_\betti$ & $6$ & $8$ & $5$
            \end{tabular}
             \end{table}

 \begin{figure}[h!]%
    \includegraphics[width=5cm]{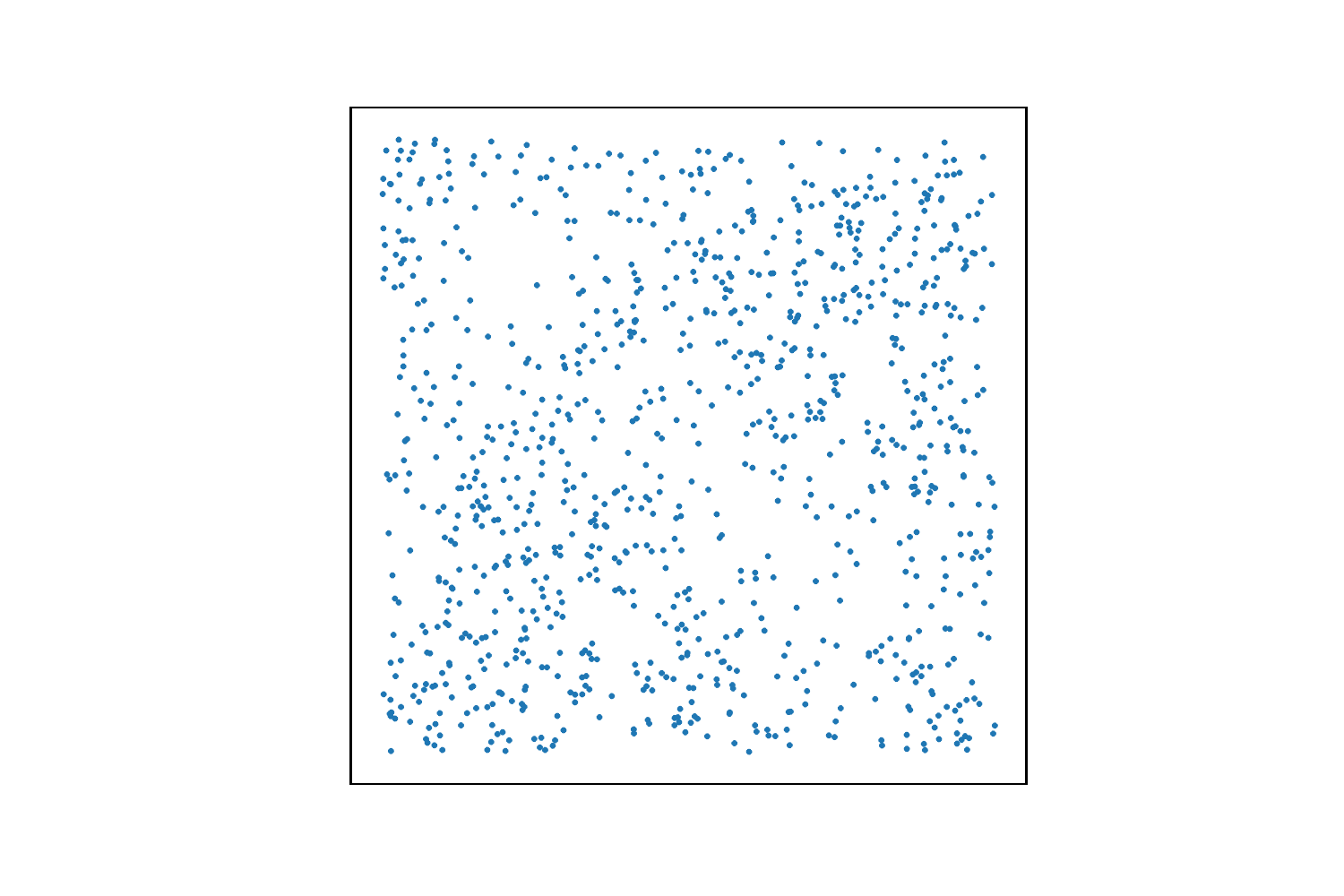} 
    \includegraphics[width=5cm]{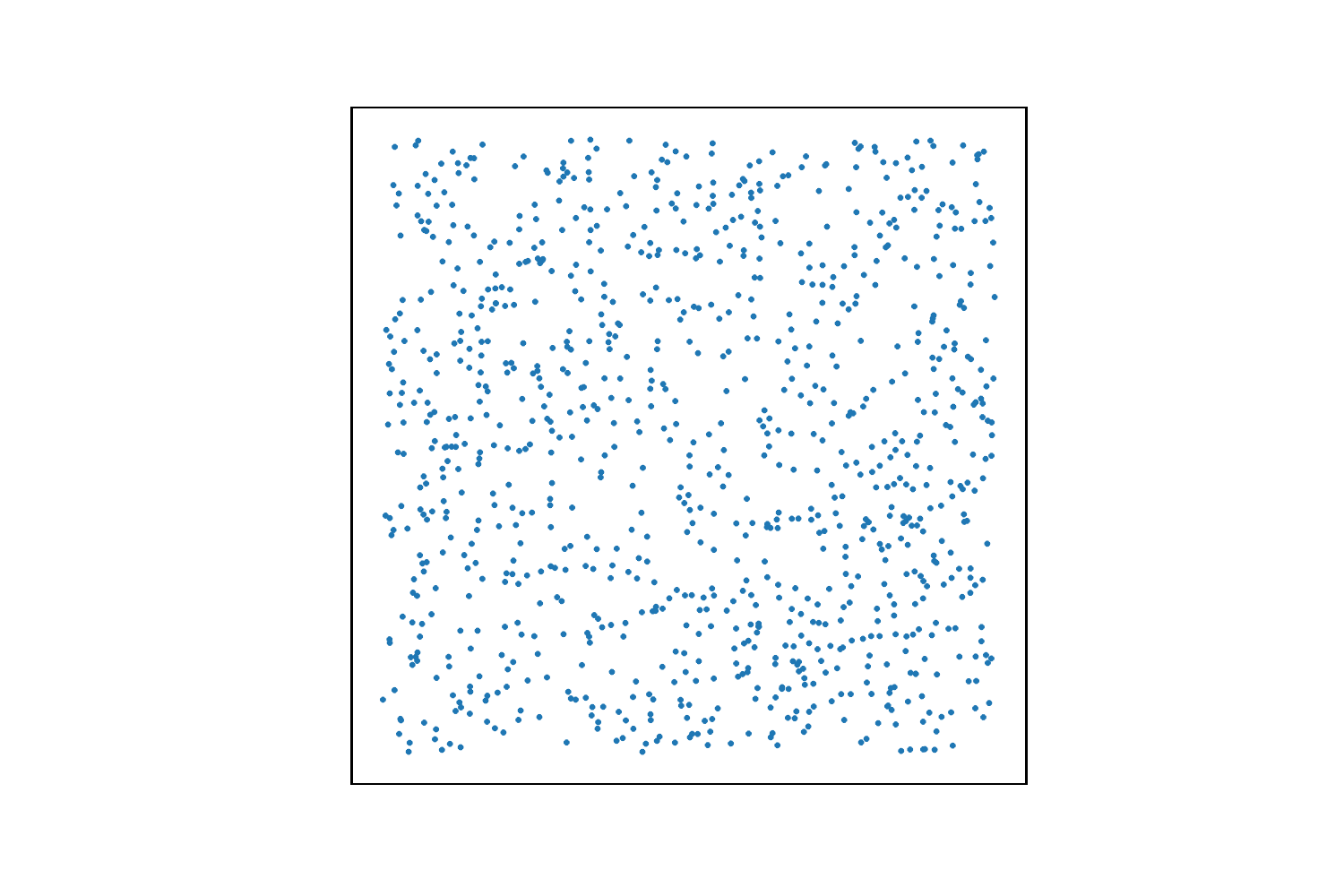}
    \includegraphics[width=5cm]{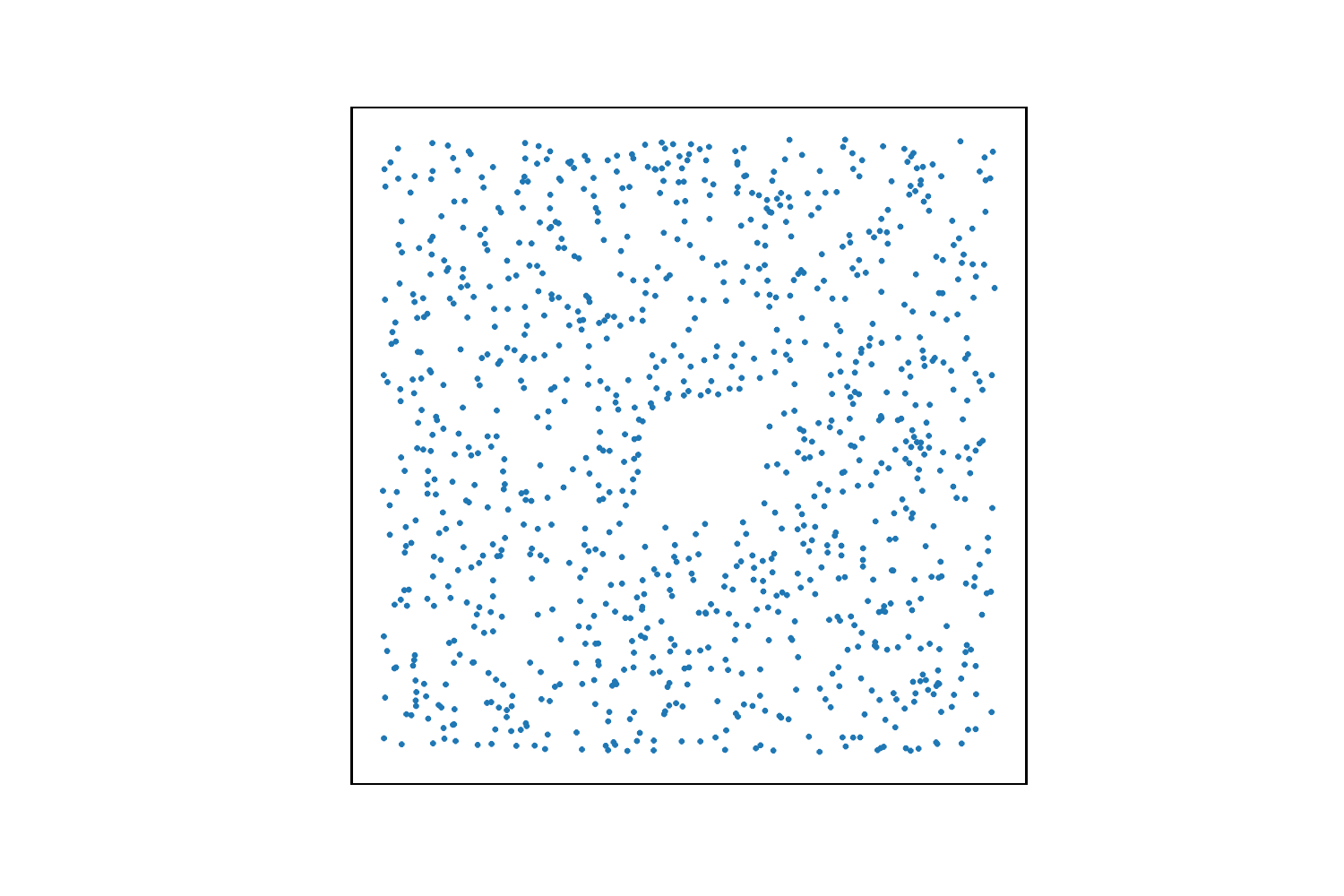}
    \includegraphics[width=5cm]{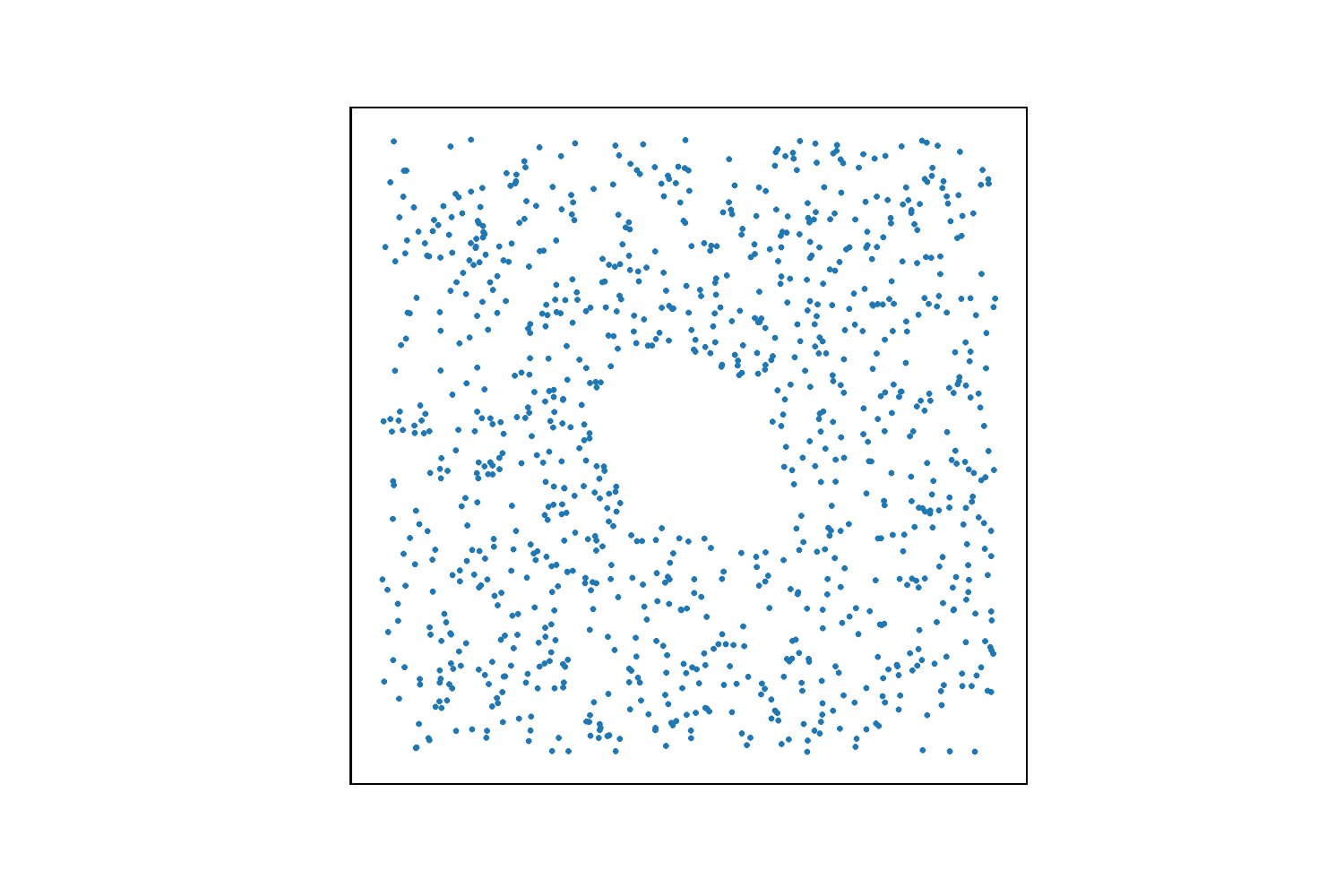}
    \includegraphics[width=5cm]{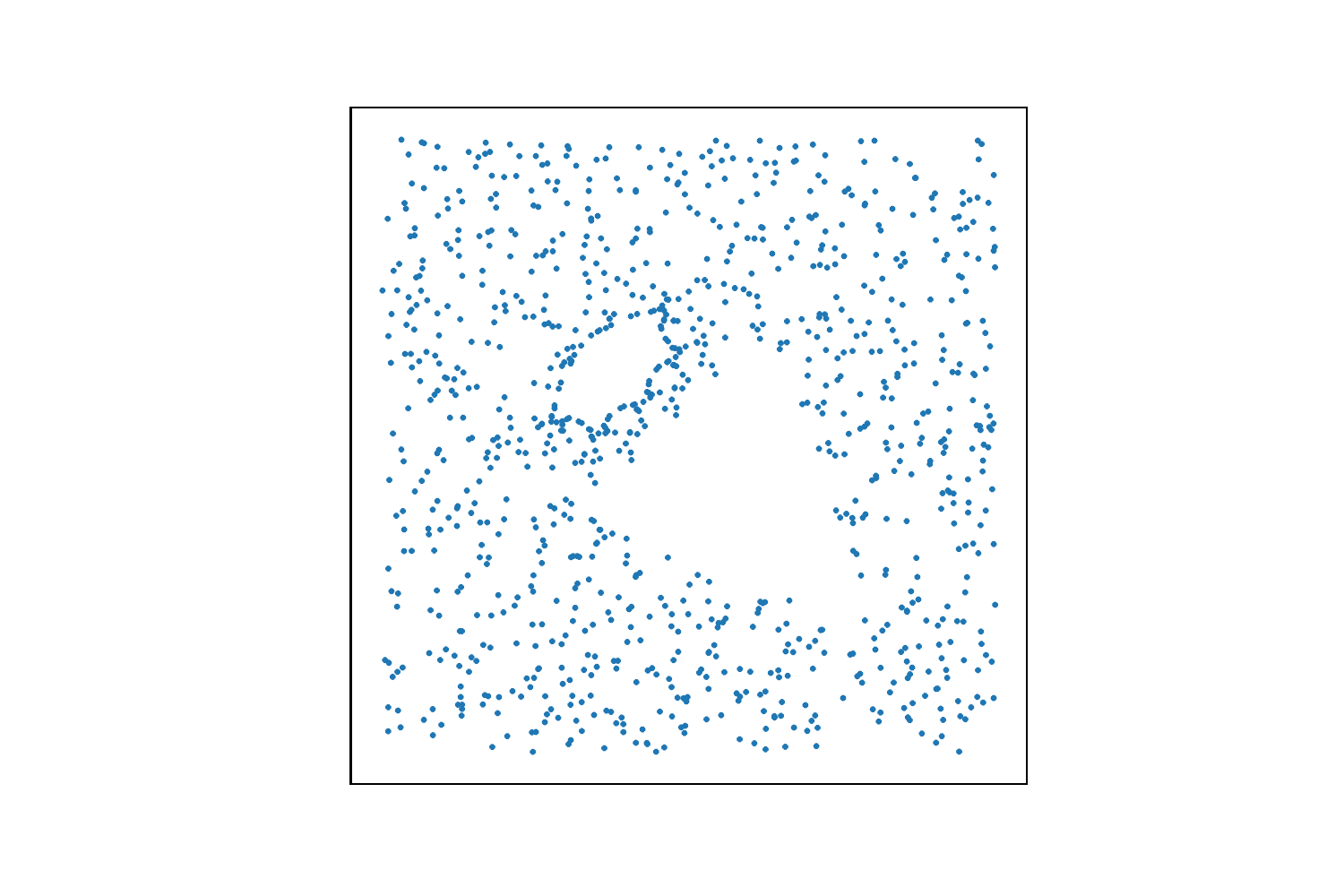}
    \caption{{\bf Orbits.} The dataset consists of $500$ orbits.
      Each orbit is a set of $1001$ points in $\RR^2$, $\{(x_n,y_n):\,n=0,\ldots,1000\}\subset \RR^2$, generated by one out of five discrete dynamical systems with a random initial value $(x_0,y_0)$.
      The five dynamical systems are given by taking the parameter $r\in \{2.5, 3.5, 4, 4.1, 4.3\}$, which thus acts as label, and update rule $x_{n+1} = x_n + r y_n (1-y_n) \mod 1 $, and $y_{n+1} = y_n + r x_n (1-x_{n}) \mod 1 $ where $(x_0,y_0)$ is chosen uniformly at random in $(0,1)^2$. 
      For each of the labels $r$ we generated $100$ orbits and used $50\%$ to $50\%$ as train-test split for the resulting $500$ orbits.
      Above shows one orbit for each value of $r=2.5,3.5,4,4.1,4.3$ (left to right and top to bottom).
      }
    \label{fig:orbit}%
\end{figure}

\begin{figure}[h!]%

    \centering
    \subfloat{{\includegraphics[width=4cm]{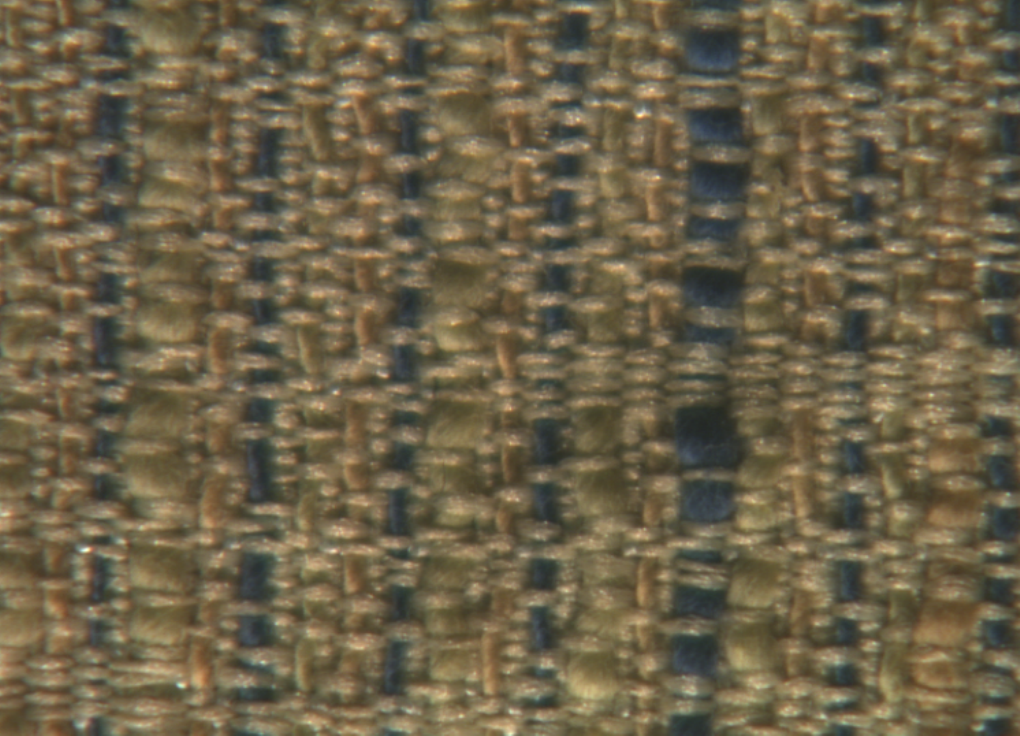} }}%
    \qquad
    \subfloat{{\includegraphics[width=4cm]{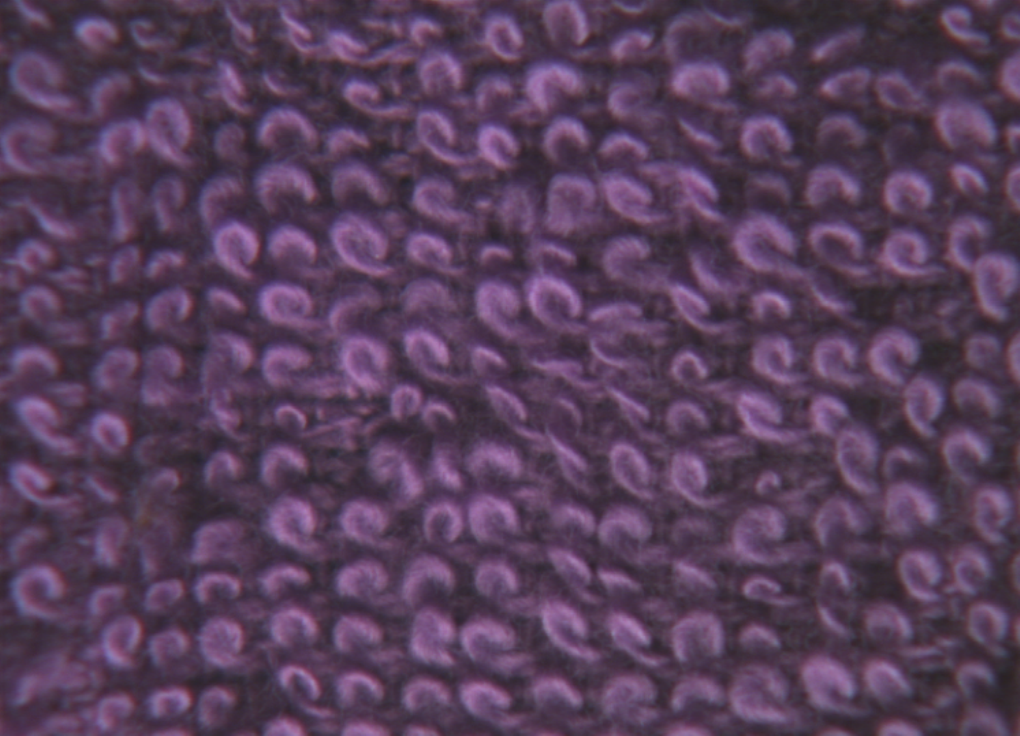} }}%
    \qquad \\
    \subfloat{{\includegraphics[width=4cm]{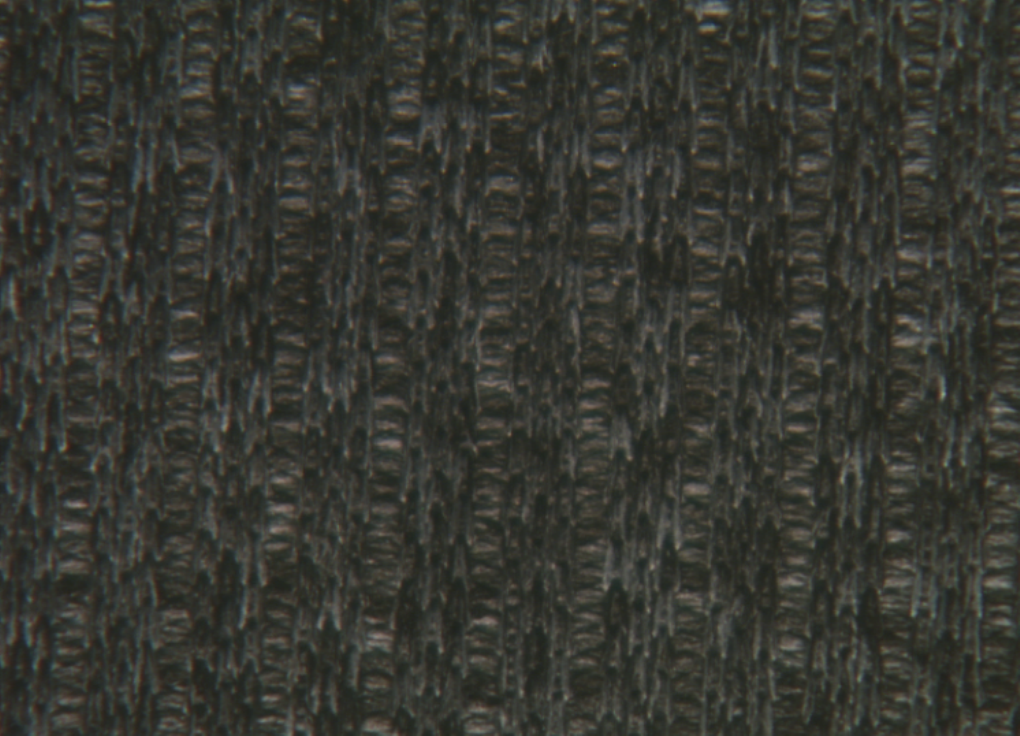} }}%
    \qquad
    \subfloat{{\includegraphics[width=4cm]{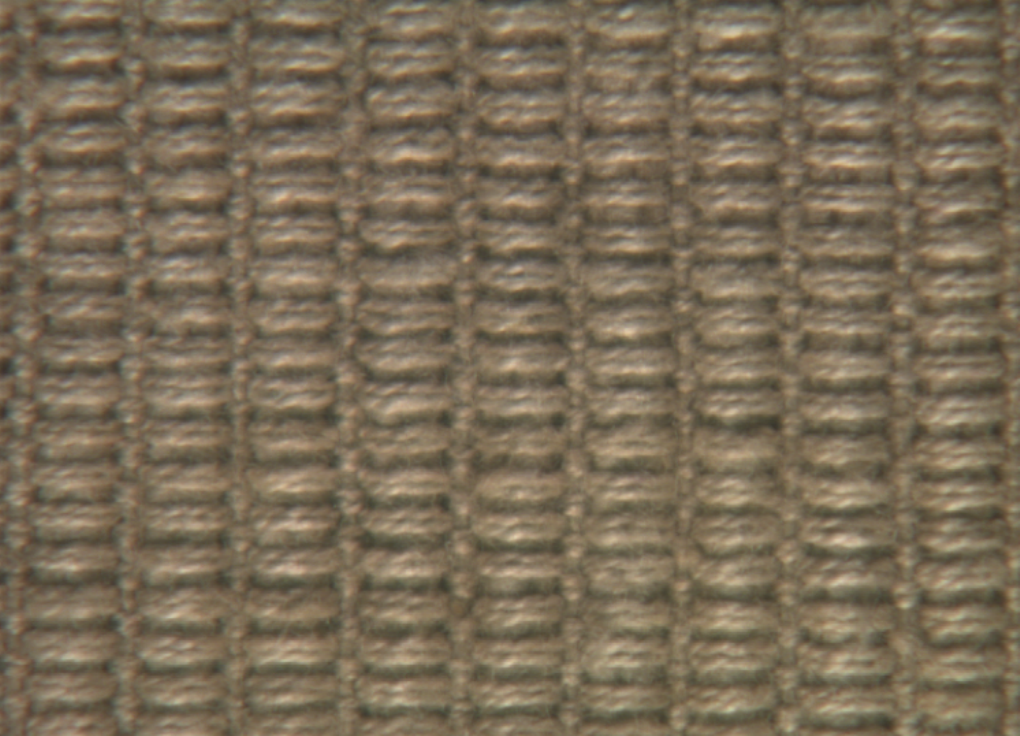} }}%

    \caption{{\bf Textures.} The {\it Outex\_TC\_00000} dataset of surface textures~\cite{ojala2002outex}.
The data set consists of 240 images for training, 240 for testing, as prescribed by the test suite.
Each sample carries one out of 24 labels.
Above shows texture of four different labels.}
    \label{fig:outex}%
\end{figure}

\begin{figure}[h!]%
\centering
\includegraphics[width=7cm]{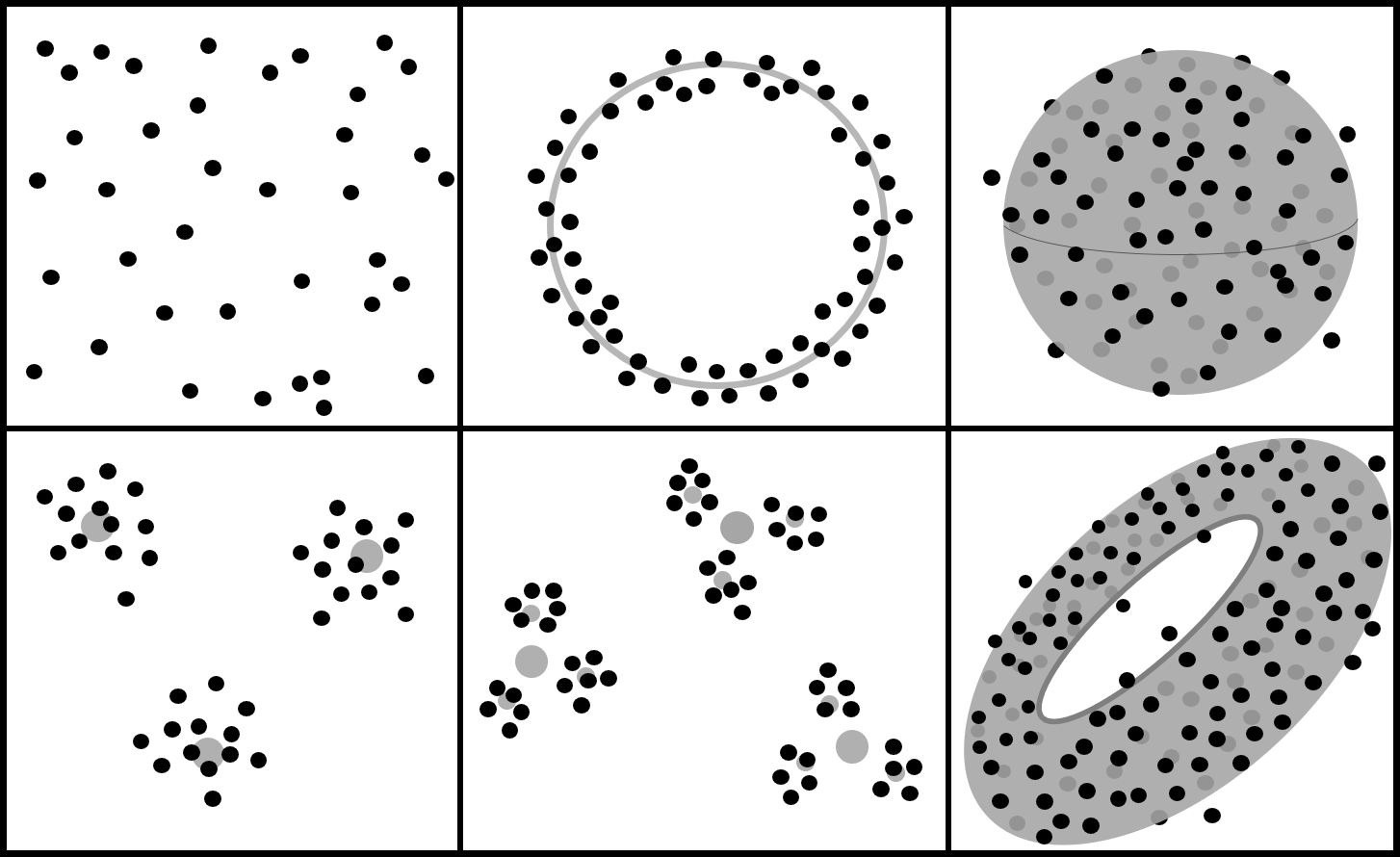} 
\caption{{\bf Shapes.}
The synthetic shapes dataset~\cite{adams2017persistence, JZZ18} consists of point cloud data with six labels: random, circle, sphere, clusters, clusters-of-clusters and torus.
Within each label, there are 50 point clouds containing 500 points each.
The Gaussian noise model used in each case has standard deviation 0.1.
We use a 50\% to 50\% test-train split and the persistence diagrams provided by~\cite{JZZ18} \url{https://github.com/bziiuj/pcodebooks}.
Note that our learning method is different to that of~\cite{adams2017persistence, JZZ18} where $K$-medoids clustering is used.
}
\label{fig:shapes}%
\end{figure}

\section{Conclusion}
We have presented a two-step methodology to build a feature map for barcodes: the first step identifies a barcode as a persistence path; the second step computes the signature of this path.
The motivation for the first step is that it captures the dynamic, time-indexed nature of topological persistence.
In other words, just as important as the appearance and disappearance of topological features across various scales is the {\em order} in which these events occur.
The motivation for the second step is that the signature provides an extremely well-studied feature map for paths.
The four central advantages of this approach are:
\begin{description}

\item [Theory]
Barcodes are represented as elements in the tensor algebra given as iterated integrals.
The tensor algebra has a rich and well-understood algebraic structure that makes it easy to establish both universality and characteristicness.
Moreover, this allows tools from stochastic analysis and non-commutative algebra to interpret barcodes produced by topological data analysis.

\item [Performance]
We benchmarked our feature maps on standard datasets against two recent, very competitive methods (one kernelized, one unkernelized) and achieved state-of-the art performance on two out of the three benchmarks. 

\item [Flexibility]
Persistence path embeddings differ in stability, computability, and discriminative power.
This flexibility to emphasize one aspect over the other can prove decisive for a given dataset. 
For example, the Betti and Euler embeddings provide a large dimensionality reduction, with the Euler embedding yielding further computational advantages since the whole barcode never has to be computed.
In this case, the price is a reduced discriminative power, which was not relevant to the benchmark datasets considered here.

\item [(Un)kernelized learning]
We provide both kernelized and unkernelized versions of our feature maps.
The former allows us to use the well-developed modular kernel learning approach that provides statistical learning guarantees.
The latter allows to use far more general classifiers (e.g., those which use regularization beyond Tikhonov to avoid overfitting), and hence potentially deal with large scale problems that are beyond the current reach of kernel-based methods.

\end{description}

\bibliographystyle{abbrv}
{\small
\bibliography{rp}}
\end{document}